
\documentclass{article}


\usepackage[ruled]{algorithm2e}
\usepackage{microtype}
\usepackage{graphicx}
\usepackage{subcaption}
\usepackage{booktabs} 


\usepackage{hyperref}

\usepackage{amssymb}
\usepackage{amsthm}
\usepackage{amsmath}
\usepackage{mathtools}
\usepackage{enumitem}
\usepackage{multirow}


\usepackage{dsfont}
\usepackage{enumitem}
\usepackage[dvipsnames]{xcolor}
\usepackage{amsmath}
\usepackage{soul}
\usepackage{subcaption}
\usepackage{multirow}
\usepackage{booktabs}
\usepackage{csquotes}
\usepackage{graphicx}
\usepackage{mathtools}

\def\methodabv{SOC}
\def\methodname{Skew Orthogonal Convolution}
\def\methodnamebold{\textbf{S}kew \textbf{O}rthogonal \textbf{C}onvolution}

\def\methodabvcomplex{SUC}

\def\methodnamecomplexbold{\textbf{S}kew \textbf{U}nitary \textbf{C}onvolution}
\def\archname{LipConvnet}

\def\convtransposeoperator{\mathrm{conv\_transpose}}

\def\convthreedtransposeoperator{\mathrm{conv3d\_transpose}}

\usepackage{enumitem}


\setlist[description]{leftmargin=*,labelindent=*}
\setlist[itemize]{leftmargin=*}
\setlist[enumerate]{leftmargin=*}

\usepackage[accepted]{icml2021}

\newtheorem*{thm}{\textbf{Theorem}}
\newtheorem{theorem}{\textbf{Theorem}}

\newtheorem{definition}{\textbf{Definition}}

\newcommand{\bX}{\mathbf{X}}
\newcommand{\bY}{\mathbf{Y}}

\newcommand{\bm}{\mathbf{m}}

\newcommand{\bx}{\mathbf{x}}
\newcommand{\by}{\mathbf{y}}
\newcommand{\bR}{\mathbf{R}}

\newcommand{\bv}{\mathbf{v}}

\newcommand{\bA}{\mathbf{A}}
\newcommand{\bB}{\mathbf{B}}

\newcommand{\bU}{\mathbf{U}}

\newcommand{\bS}{\mathbf{S}}
\newcommand{\bQ}{\mathbf{Q}}
\newcommand{\bP}{\mathbf{P}}
\newcommand{\bE}{\mathbf{E}}
\newcommand{\bL}{\mathbf{L}}

\newcommand{\bI}{\mathbf{I}}
\newcommand{\bT}{\mathbf{T}}

\newcommand{\bM}{\mathbf{M}}

\newcommand{\bD}{\mathbf{D}}

\newcommand{\bJ}{\mathbf{J}}

\newlength{\Oldarrayrulewidth}


\icmltitlerunning{Skew Orthogonal Convolutions}

\begin{document}

\twocolumn[
\icmltitle{Skew Orthogonal Convolutions}



\icmlsetsymbol{equal}{*}

\begin{icmlauthorlist}
\icmlauthor{Sahil Singla}{umd}
\icmlauthor{Soheil Feizi}{umd}
\end{icmlauthorlist}

\icmlaffiliation{umd}{Department of Computer Science, University of Maryland, College Park}

\icmlcorrespondingauthor{Sahil Singla}{ssingla@umd.edu}

\icmlkeywords{Machine Learning, ICML}

\vskip 0.3in
]



\printAffiliationsAndNotice{}  

\begin{abstract}
Training convolutional neural networks with a Lipschitz constraint under the $l_{2}$ norm is useful for provable adversarial robustness, interpretable gradients, stable training, etc. While 1-Lipschitz networks can be designed by imposing a 1-Lipschitz constraint on each layer, training such networks requires each layer to be gradient norm preserving (GNP) to prevent gradients from vanishing. However, existing GNP convolutions suffer from slow training, lead to significant reduction in accuracy and provide no guarantees on their approximations. In this work, we propose a GNP convolution layer called \methodnamebold\ (\methodabv) that uses the following mathematical property: when a matrix is {\it Skew-Symmetric}, its exponential function is an {\it orthogonal} matrix. To use this property, we first construct a convolution filter whose Jacobian is Skew-Symmetric. Then, we use the Taylor series expansion of the Jacobian exponential to construct the \methodabv\ layer that is orthogonal. To efficiently implement \methodabv, we keep a finite number of terms from the Taylor series and provide a provable guarantee on the approximation error. Our experiments on CIFAR-10 and CIFAR-100 show that \methodabv\ allows us to train provably Lipschitz, large convolutional neural networks significantly faster than prior works while achieving significant improvements for both standard and certified robust accuracies.



%

\end{abstract}

\section{Introduction}\label{sec:introduction}
The Lipschitz constant\footnotemark[2] of a neural network puts an upper bound on how much the output is allowed to change in proportion to a change in input. Previous work has shown that a small Lipschitz constant leads to improved generalization bounds \cite{Bartlettgeneralization, long2019sizefree}, adversarial robustness \cite{cisseparseval2017, 42503} and interpretable gradients \cite{tsipras2019robustness}. The Lipschitz constant also upper bounds the change in gradient norm during backpropagation and can thus prevent gradient explosion during training, allowing us to train very deep networks \cite{xiao2018dynamical}. Moreover, the Wasserstein distance between two probability distributions can be expressed as a maximization over 1-Lipschitz functions \cite{villani2012optimal, peyre2018computational}, and has been used for training Wasserstein GANs \cite{arjovsky2017wasserstein, gulrajani2017improved} and Wasserstein VAEs \cite{tolstikhin2018wasserstein}. 

Using the Lipschitz\footnotetext[2]{Unless specified, we use Lipschitz constant under the $l_{2}$ norm.} composition property \big(i.e. $Lip(f \circ g) \leq Lip(f) Lip(g)$\big), a Lipschitz constant of a neural network can be bounded by the product of the Lipschitz constant of all layers. 1-Lipschitz neural networks can thus be designed by imposing a 1-Lipschitz constraint on each layer.  However, \citet{Anil2018SortingOL} identified a key difficulty with this approach: because a layer with a Lipschitz bound of 1 can only reduce the norm of the gradient during backpropagation, each step of backprop gradually attenuates the gradient norm, resulting in a much smaller gradient for the layers closer to the input, thereby making training slow and difficult. To address this problem, they introduced Gradient Norm Preserving (GNP) architectures where each layer preserves the gradient norm by ensuring that the Jacobian of each layer is an {\it Orthogonal} matrix (for all inputs to the layer). For convolutional layers, this involves constraining the Jacobian of each convolution layer to be an Orthogonal matrix \cite{li2019lconvnet, xiao2018dynamical} and using a GNP activation function called GroupSort \cite{Anil2018SortingOL}. 


\begin{figure*}[t]
\centering
\begin{subfigure}{0.26\linewidth}
\centering
\includegraphics[trim=7cm 7cm 7cm 5cm, clip, width=\linewidth]{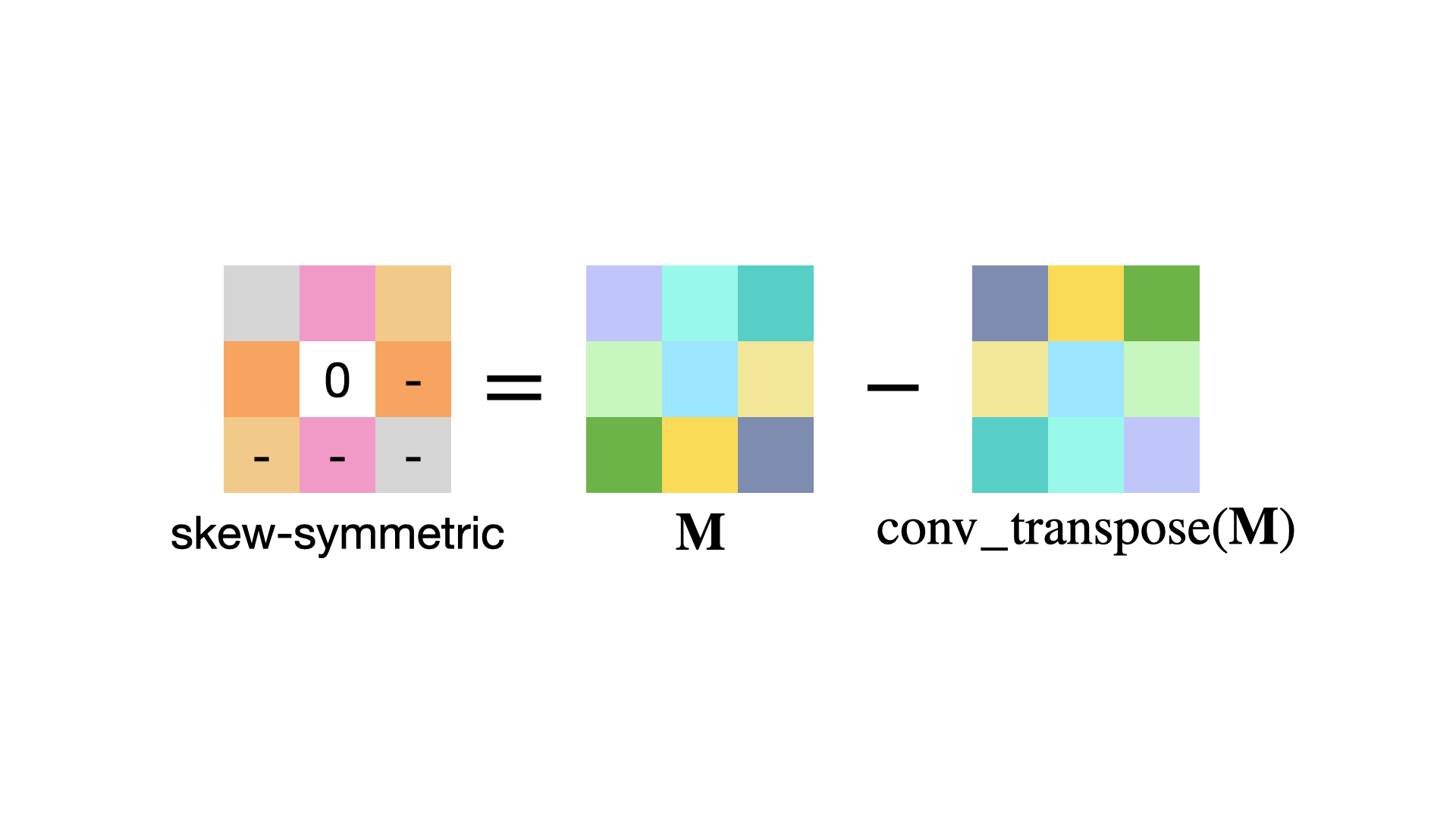}
\caption{Constructing a Skew-Symmetric convolution filter}
\label{subfig:skew2d}
\end{subfigure}\quad
\begin{subfigure}{0.25\linewidth}
\centering
\includegraphics[trim=9cm 7cm 5cm 7cm, clip, width=\linewidth]{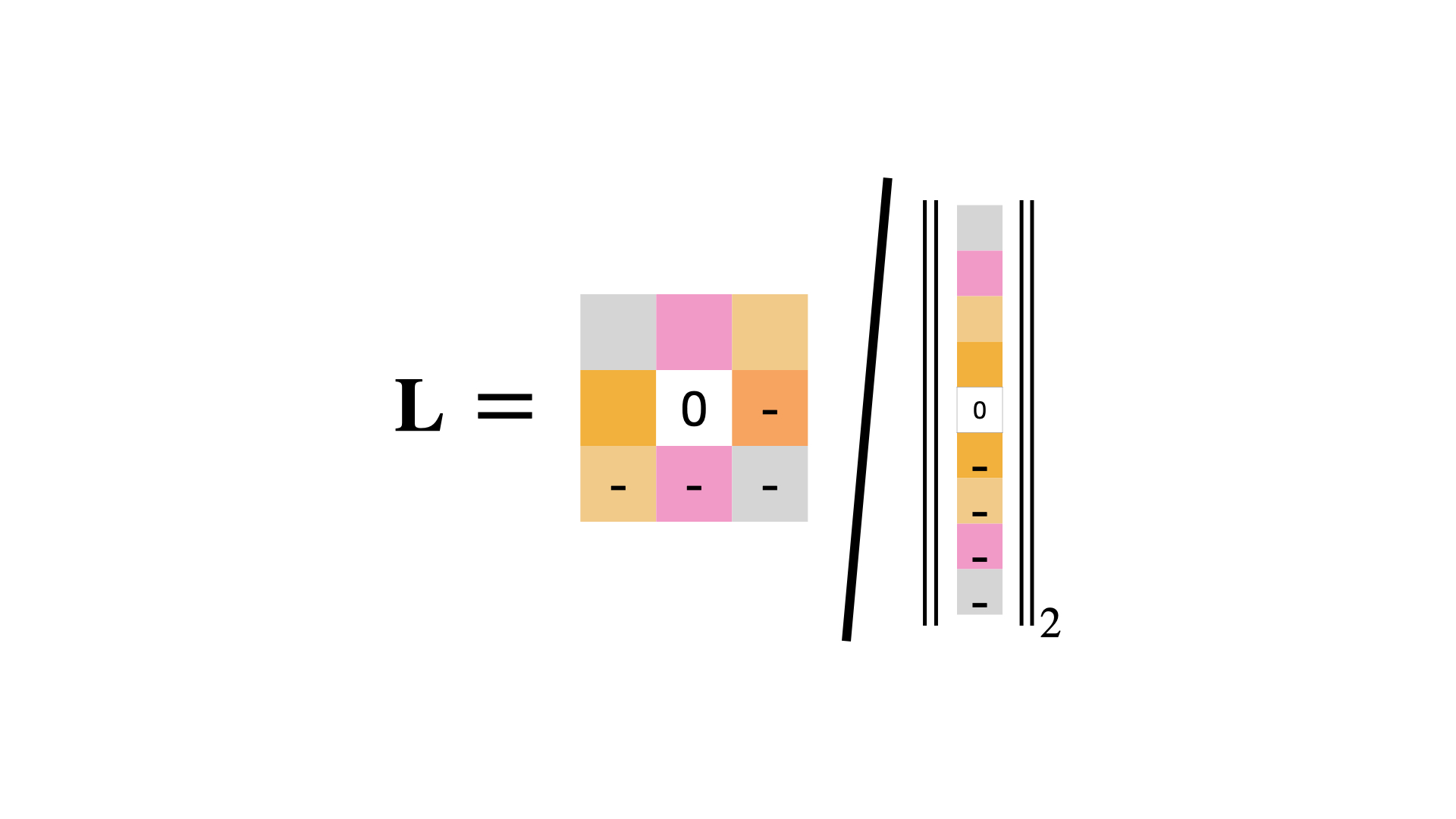}
\caption{Spectral normalization}
\label{subfig:spectral_norm}
\end{subfigure}\quad 
\begin{subfigure}{0.4\linewidth}
\centering
\includegraphics[trim=1cm 10cm 1cm 10cm, clip, width=\linewidth]{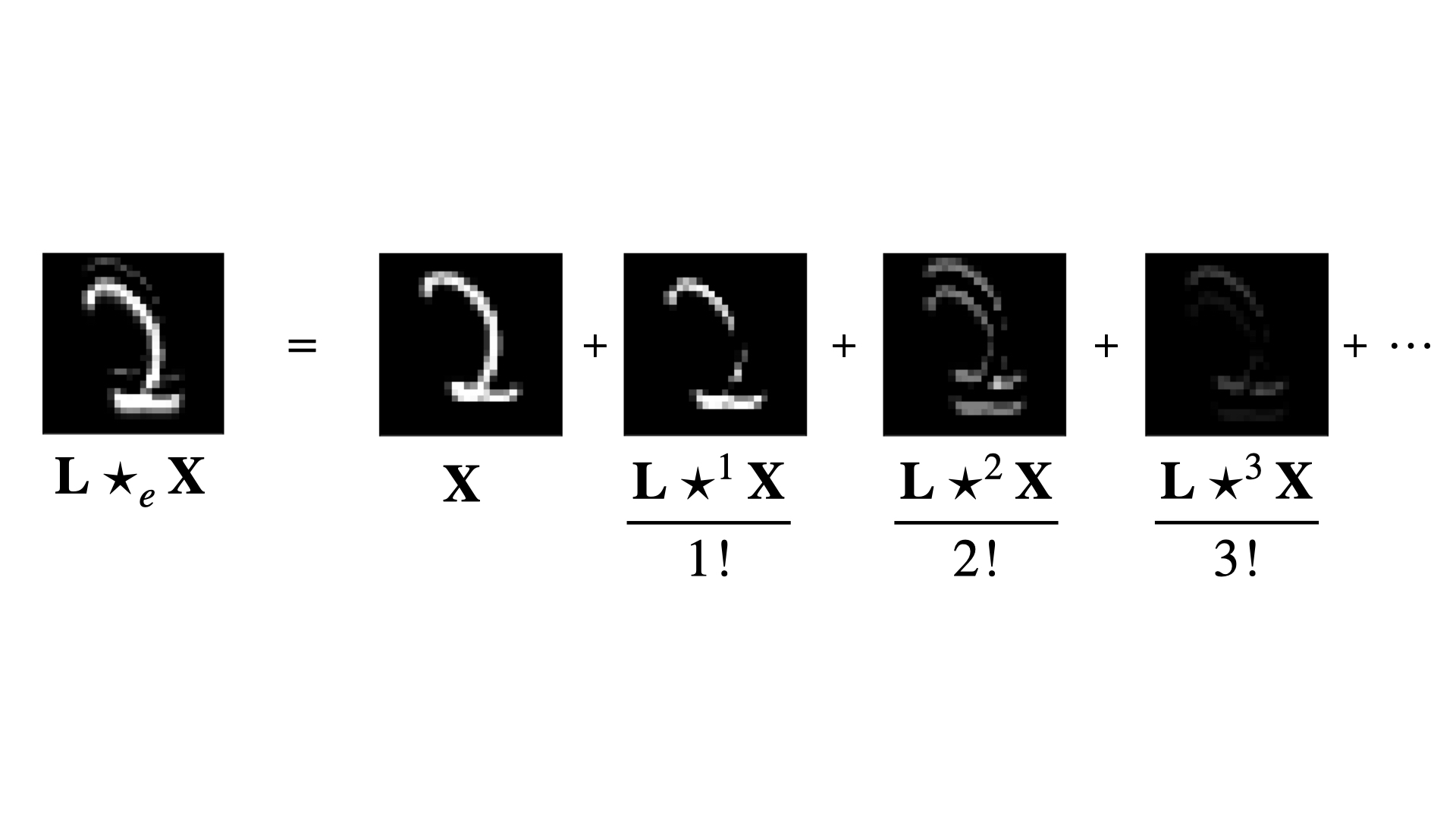}
\caption{Convolution exponential $(\bL \star_{e} \bX)$}
\label{subfig:conv2d_trunc}
\end{subfigure}
\caption{Each color denotes a scalar, the minus sign $(-)$ on top of some color denotes the negative of the scalar with that color. Given any convolution filter $\bM$, we can construct a Skew-Symmetric filter (Figure \ref{subfig:skew2d}). Next, we apply spectral normalization to bound the norm of the Jacobian (Figure \ref{subfig:spectral_norm}). On input $\bX$, applying convolution exponential $(\bL \star_{e} \bX)$ results in an Orthogonal convolution (Figure \ref{subfig:conv2d_trunc}). }
\label{fig:sop_demo}
\end{figure*}

\citet{li2019lconvnet} introduced an Orthogonal convolution layer called \textbf{B}lock \textbf{C}onvolutional \textbf{O}rthogonal \textbf{P}arametrization (BCOP). BCOP uses a clever application of $\mathrm{1D}$ Orthogonal convolution filters of sizes $2 \times 1$ and $1 \times 2$ to construct a $\mathrm{2D}$ Orthogonal convolution filter. It overcomes common issues of Lipschitz-constrained networks such as gradient norm attenuation and loose lipschitz bounds and enables training of large, provably 1-Lipschitz Convolutional Neural Networks (CNNs) achieving results competitive with existing methods for provable adversarial robustness. However, BCOP suffers from slow training, significant reduction in accuracy and provides no guarantees on its approximation of an Orthogonal Jacobian matrix (details in Section \ref{sec:related_work}).

To address these shortcomings, we introduce an Orthogonal convolution layer called \methodnamebold\ (\methodabv). For provably Lipschitz CNNs, \methodabv\ results in significantly improved standard and certified robust accuracies compared to BCOP while requiring significantly less training time (Table \ref{table:provable_robust_acc}). We also derive provable guarantees on our approximation of an Orthogonal Jacobian. 

Our work is based on the following key mathematical property: If $\bA$ is a Skew-Symmetric matrix (i.e.  $\bA = -\bA^{T}$), $\exp(\bA)$ is an Orthogonal matrix (i.e. $\exp(\bA)^{T}\exp(\bA)=\exp(\bA)\exp(\bA)^{T}=\bI$) where
\begin{align}
 \exp(\bA) =  \bI + \frac{\bA}{1!} + \frac{\bA^{2}}{2!} + \frac{\bA^{3}}{3!} \cdots\ =\ \sum_{i=0}^{\infty} \frac{\bA^{i}}{i!}. \quad \label{eq:intro_exp}
\end{align}
To design an Orthogonal convolution layer using this property, we need to: (a) construct \emph{Skew-Symmetric filters}, i.e. convolution filters whose Jacobian is Skew-Symmetric; and (b) efficiently approximate $\exp(\bJ)$ with a guaranteed small error where $\bJ$ is the Jacobian of a Skew-Symmetric filter. 

To construct Skew-Symmetric convolution filters, we prove (in Theorem \ref{thm:main_theorem}) that every Skew-Symmetric filter $\bL$ can be written as $\bL = \bM - \convtransposeoperator(\bM)$ for some filter $\bM$ where $\convtransposeoperator$ represents the {\it convolution transpose operator} defined in equation \eqref{eq:conv2d_h} (note that this operator is different from the matrix transpose). This result is analogous to the property that every real Skew-Symmetric matrix $\bA$ can be written as $\bA = \bB - \bB^{T}$ for some real matrix $\bB$. 


We can efficiently approximate $\exp(\bJ)$ using a finite number of terms in equation \eqref{eq:intro_exp} and the convolution exponential \cite{Hoogeboom2020TheCE}. But it is unclear whether the series can be approximated with high precision and how many terms need to be computed to achieve the desired approximation error. To resolve these issues, we derive a bound on the $l_{2}$ norm of the difference between $\exp(\bJ)$ and its approximation using the first $k$ terms in equation \eqref{eq:intro_exp}, called $\bS_{k}(\bJ)$ when $\bJ$ is Skew-Symmetric (Theorem \ref{thm:approx_error}):
\begin{align}
    \|\exp(\bJ) - \bS_{k}(\bJ)\|_{2} \leq \frac{\|\bJ\|_{2}^{k}}{k!}. \label{eq:intro_approx_error}
\end{align}


This guarantee suggests that when $\|\bJ\|_{2}$ is small, $\exp(\bJ)$ can be approximated with high precision using a small number of terms. Also, the factorial term in denominator causes the error to decay very fast as $k$ increases. In our experiments, we observe that using $k=12,\ \|\bJ\|_{2} \leq 1.8$ leads to an error bound of $2.415 \times 10^{-6}$. 
We can use spectral normalization \cite{miyato2018spectral} to ensure $\|\bJ\|_{2}$ is provably bounded using the theoretical result of \citet{boundsingular}. The design of \methodabv\ is summarized in Figure \ref{fig:sop_demo}. Code is available at \url{https://github.com/singlasahil14/SOC}.

To summarize, we make the following contributions:
\begin{itemize}
    \item We introduce an Orthogonal convolution layer (called \methodnamebold\ or \methodabv) by first designing a Skew-Symmetric convolution filter (Theorem \ref{thm:main_theorem}) and then computing the exponential function of its Jacobian using a finite number of terms in its Taylor series. 
    
    \item For a Skew-Symmetric filter with Jacobian $\bJ$, we derive a bound on the approximation error between $\exp\left(\bJ\right)$ and its $k$-term approximation (Theorem \ref{thm:approx_error}).

    \item \methodabv\ achieves significantly higher standard and provable robust accuracy on 1-Lipschitz convolutional neural networks than BCOP while requiring less training time (Table  \ref{table:provable_robust_acc}.) For example, \methodabv\ achieves $2.82\%$ higher standard and $3.91\%$ higher provable robust accuracy with $54.6\%$ less training time on CIFAR-10 using the \archname-20 architecture (details in Section \ref{subsec:lipnet_arch}). For deeper networks ($\geq 30$ layers), SOC outperforms BCOP with an improvement of $\geq 10\%$ on both standard and robust accuracy again achieving $\geq 50\%$ reduction in the training time.
    \item In Theorem \ref{thm:approx_cyclic_real}, we prove that for every Skew-Symmetric filter with Jacobian $\bJ$, there exists Skew-Symmetric matrix $\bB$ satisfying: $\exp(\bB)=\exp(\bJ),\  \|\bB\|_{2} \leq \pi$ . Since $\|\bJ\|_{2}$ can be large, this can allow us to reduce the approximation error without sacrificing the expressive power.
\end{itemize}

\section{Related work}\label{sec:related_work}
\textbf{Provably lipschitz convolutional neural networks}: \citet{Anil2018SortingOL} proposed a class of fully connected neural networks (FCNs) which are Gradient Norm Preserving (GNP) and provably 1-Lipschitz using the GroupSort activation and Orthogonal weight matrices. Since then, there have been numerous attempts to tightly enforce 1-Lipschitz constraints on convolutional neural networks (CNNs) \cite{cisseparseval2017, Tsuzuku2018LipschitzMarginTS, qian2018lnonexpansive, gouk2020regularisation, sedghi2018singular}. However, these approaches either enforce loose lipschitz bounds or are computationally intractable for large networks.  \citet{li2019lconvnet} introduced an Orthogonal convolution layer called \textbf{B}lock \textbf{C}onvolutional \textbf{O}rthogonal \textbf{P}arametrization (BCOP) that avoids the aforementioned issues and allows the training of large, provably 1-Lipschitz CNNs while achieving provable robust accuracy comparable with the existing methods. However, it suffers from some issues: (a) it can only represent a subset of all Orthogonal convolutions, (b) it requires a BCOP convolution filter with $2n$ channels to represent all the connected components of a BCOP convolution filter with $n$ channels thus requiring 4 times more parameters, (c) to construct a convolution filter with size $k \times k$ and $n$ input/output channels, it requires $2k-1$ matrices of size $2n \times 2n$ that must remain Orthogonal throughout training; resulting in well known difficulties of optimization over the Stiefel manifold \cite{Edelman1998TheGO}, (d) it constructs convolution filters from symmetric projectors and error in these projectors can lead to an error in the final convolution filter whereas BCOP does not provide guarantees on the error. 

\textbf{Provable defenses against adversarial examples}: A classifier is said to be provably robust if one can guarantee that a classifier’s prediction remains constant within some region around the input. Most of the existing methods for provable robustness either bound the Lipschitz constant of the neural network or the individual layers \cite{weng2018CertifiedRobustness, zhang2018recurjac, zhang2018crown, Wong2018ScalingPA, Wong2017ProvableDA, Raghunathan2018SemidefiniteRF, croce2019provable, Singh2018FastAE, 2020curvaturebased}. However, these methods do not scale to large and practical networks on ImageNet. To scale to such large networks, randomized smoothing \cite{Liu2018TowardsRN, Cao2017MitigatingEA, Lcuyer2018CertifiedRT, Li2018CertifiedAR, Cohen2019CertifiedAR, Salman2019ProvablyRD, cursedimensionalitykumar20, levine2019certifiably} has been proposed as a \textit{probabilistically certified defense}. 
In contrast, the defense we propose in this work is deterministic and hence not directly comparable to randomized smoothing.

\section{Notation}\label{sec:notation}
For a vector $\bv$, $\bv_{j}$ denotes its $j^{th}$ element. For a matrix $\bA$, $\bA_{j,:}$ and $\bA_{:,k}$ denote the $j^{th}$ row and $k^{th}$ column respectively. Both $\bA_{j,:}$ and $\bA_{:,k}$ are assumed to be column vectors (thus $\bA_{j,:}$ is the transpose of $j^{th}$ row of $\bA$). $\bA_{j,k}$ denotes the element in $j^{th}$ row and $k^{th}$ column of $\bA$. 
$\bA_{:j,:k}$ denotes the matrix containing the first $j$ rows and $k$ columns of $\bA$. The same rules are directly extended to higher order tensors. Bold zero (i.e. $\mathbf{0}$) denotes the matrix (or tensor) consisting of zero at all elements and $\bI$ denotes the identity matrix. $\otimes$ denotes the kronecker product. We use $\mathbb{C}$ to denote the field of complex numbers and $\mathbb{R}$ for real numbers. For a scalar $a \in \mathbb{C}$, $\overline{a}$ denotes its complex conjugate. For a matrix (or tensor) $\bA$, $\overline{\bA}$ denotes the element-wise complex conjugate. For $\bA \in \mathbb{C}^{m \times n}$, $\bA^{H}$ denotes the Hermitian transpose (i.e. $\bA^{H} = \overline{\bA^{T}}$). For $a \in \mathbb{C}$, $\operatorname{Re}(a)$, $\operatorname{Im}(a)$ and $|a|$ denote the real part, imaginary part and modulus of $a$, respectively. We use $\iota$ to denote the imaginary part \textit{iota} (i.e. $\iota^{2}=-1$). 

For a matrix $\bA \in \mathbb{C}^{q \times r}$ and a tensor $\bB \in \mathbb{C}^{p \times q \times r}$, $\overrightarrow{\bA}$ denotes the vector constructed by stacking the rows of $\bA$ and $ \overrightarrow{\bB}$ by stacking the vectors $\overrightarrow{\bB_{j,:,:}},\ j \in [p-1]$ so that: 
\begin{align*}
&\left(\overrightarrow{\bA}\right)^{T} = \begin{bmatrix}
\bA_{0,:}^{T}\ ,\ \bA_{1,:}^{T}\ ,\ \hdots\ ,\ \bA_{q-1,:}^{T}\end{bmatrix} \\
&\left(\overrightarrow{\bB}\right)^{T} = \begin{bmatrix}
\left(\overrightarrow{\bB_{0,:,:}}\right)^{T}\ ,\ \left(\overrightarrow{\bB_{1,:,:}}\right)^{T}\ ,\ \hdots\ ,\ \left(\overrightarrow{\bB_{p-1,:,:}}\right)^{T} \end{bmatrix}
\end{align*}

For a $2$D convolution filter, $\bL \in \mathbb{C}^{p \times q \times r \times s}$, we define the tensor $\convtransposeoperator(\bL) \in \mathbb{C}^{q \times p \times r \times s}$ as follows:
\begin{align}
 [\mathrm{conv\_transpose}(\bL)]_{i,j,k,l} = \overline{[\bL]}_{j,i,r-1-k,s-1-l}  \label{eq:conv2d_h}
\end{align}
Note that this is very different from the usual matrix transpose. See an example in Section \ref{sec:skew_filters}. Given an input $\bX \in \mathbb{C}^{q \times n \times n}$, we use $\bL \star \bX \in \mathbb{C}^{p \times n \times n}$ to denote the convolution of filter $\bL$ with $\bX$. We use the the notation $\bL \star^{i} \bX \triangleq \bL \star^{i-1}\left(\bL \star \bX\right)$. Unless specified, we assume zero padding and stride 1 in each direction.

\begin{figure}[t]
\centering
\includegraphics[trim=0cm 0cm 5cm 15cm, clip,width=0.8\linewidth]{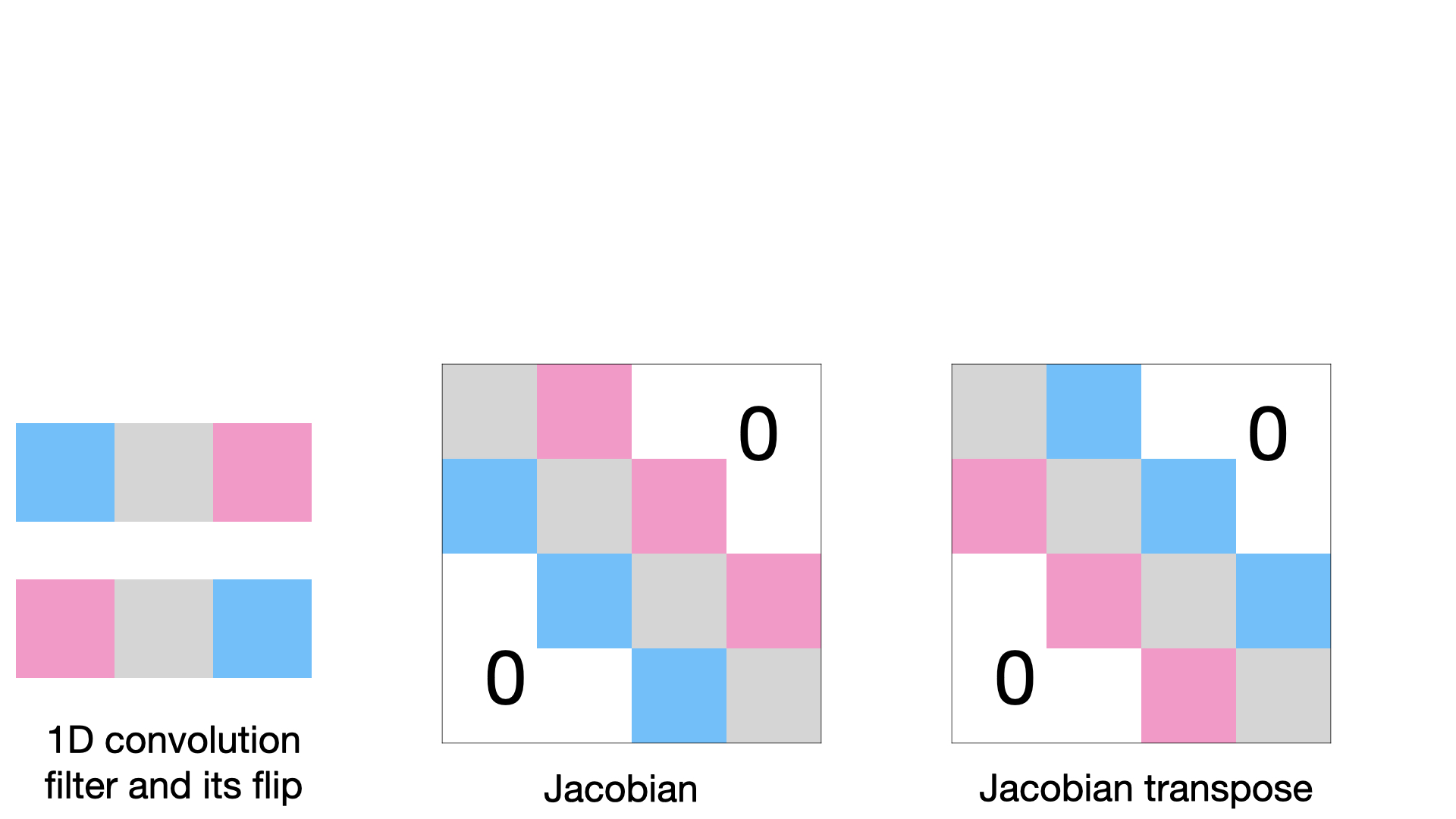}
\caption{Each color denotes a scalar. Flipping a conv. filter (of odd size) transposes its Jacobian. Thus, any odd-sized filter that equals the negative of its flip leads to a Skew-Symmetric Jacobian.}
\label{fig:transpose_demo}
\end{figure}

\section{Filters with Skew Symmetric Jacobians}\label{sec:skew_filters}
We know that for any matrix $\bA$ that is Skew-Symmetric ($\bA = -\bA^{T}$),\ $\exp(\bA)$ is an Orthogonal matrix:
\begin{align*}
    \exp(\bA) \left(\exp(\bA)\right)^{T} = \left(\exp(\bA)\right)^{T}\exp(\bA) = \bI 
\end{align*}
This suggests that if we can parametrize the complete set of convolution filters with Skew-Symmetric Jacobians, we can use the convolution exponential \cite{Hoogeboom2020TheCE} to approximate an Orthogonal matrix. To construct this set, we first prove that, if convolution using filter $\bL \in \mathbb{R}^{m \times m \times p \times q}$ ($p$ and $q$ are odd) has Jacobian $\bJ$, the convolution using $\convtransposeoperator(\bL)$ results in Jacobian $\bJ^{T}$. 

To motivate our proof, consider a filter $\bL \in \mathbb{R}^{1 \times 1 \times 3 \times 3}$. Applying $\convtransposeoperator$ (equation \eqref{eq:conv2d_h}), we get:
\begin{align*}
\bL = \begin{bmatrix}
a & b & c\\
d & e & f\\
g & h & i
\end{bmatrix},\quad
\convtransposeoperator\left(\bL\right) = \begin{bmatrix}
i & h & g\\
f & e & d\\
c & b & a
\end{bmatrix}
\end{align*}
That is, for a $\mathrm{2D}$ convolution filter with 1 channel, $\convtransposeoperator$
flips it along the horizontal and vertical directions. To understand why this flipping transposes the Jacobian, we provide another example for a $\mathrm{1D}$ convolution filter in Figure \ref{fig:transpose_demo}. Our proof uses the following expression for the Jacobian of convolution using a filter $\bL \in \mathbb{R}^{1 \times 1 \times (2p+1) \times (2q+1)}$ and input $\bX \in \mathbb{R}^{1 \times n \times n}$:
$$\bJ = \sum_{i=-p}^{p} \sum_{j=-q}^{q} \bL_{0, 0, p+i, q+j} \left(\bP^{(i)} \otimes \bP^{(j)}\right)$$
where $\bP^{(k)} \in \mathbb{R}^{n \times n}$, $\bP^{(k)}_{i,j}=1$ if $i-j=k$ and $0$ otherwise. The above equation leads to the following theorem:
\begin{theorem}\label{thm:2d_kernel}
	Consider a $\mathrm{2D}$ convolution filter $\bL \in \mathbb{R}^{m \times m \times (2p+1) \times (2q+1)}$ and input $\bX  \in \mathbb{R}^{m \times n \times n}$. Let $\bJ = \nabla_{\overrightarrow{\bX}} \ \overrightarrow{\left(\bL \star \bX\right)}$, then  $\bJ^{T}=\nabla_{\overrightarrow{\bX}} \ \overrightarrow{\left(\convtransposeoperator(\bL) \star \bX\right)}$.
\end{theorem}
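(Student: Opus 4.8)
The plan is to first establish the claim for a single input/output channel ($m=1$) using the Kronecker-product formula for $\bJ$ stated just above the theorem, and then lift it to arbitrary $m$ by exploiting the block structure of multi-channel convolution. For the single-channel case, the one algebraic fact I need is that $(\bP^{(k)})^{T} = \bP^{(-k)}$: indeed $\bP^{(k)}_{i,j}=1$ exactly when $i-j=k$, so transposing swaps the roles of $i$ and $j$ and negates the offset. Combining this with $(\bA \otimes \bB)^{T} = \bA^{T} \otimes \bB^{T}$ and linearity, I would transpose
$$\bJ = \sum_{i=-p}^{p}\sum_{j=-q}^{q} \bL_{0,0,p+i,q+j}\,\bigl(\bP^{(i)} \otimes \bP^{(j)}\bigr)$$
term by term to get $\bJ^{T} = \sum_{i,j}\bL_{0,0,p+i,q+j}\,(\bP^{(-i)} \otimes \bP^{(-j)})$, and then reindex $i \mapsto -i$, $j \mapsto -j$ to obtain $\bJ^{T} = \sum_{i,j}\bL_{0,0,p-i,q-j}\,(\bP^{(i)} \otimes \bP^{(j)})$.

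It remains to recognize this last expression as the Jacobian of $\convtransposeoperator(\bL) \star \bX$. Applying the same Kronecker formula to the filter $\convtransposeoperator(\bL)$ and reading off its entries from the definition \eqref{eq:conv2d_h} with $r=2p+1$, $s=2q+1$, I get $[\convtransposeoperator(\bL)]_{0,0,p+i,q+j} = \overline{\bL}_{0,0,p-i,q-j} = \bL_{0,0,p-i,q-j}$, where the conjugate drops because $\bL$ is real. Substituting this into the formula yields exactly $\bJ^{T}$, settling the case $m=1$.

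For general $m$, I would use that the $a$-th output channel is $\sum_{b} \bL_{a,b,:,:} \star \bX_{b,:,:}$, so under the stacking $\overrightarrow{\cdot}$ the Jacobian $\bJ$ decomposes into blocks $\bJ_{a,b}$, where $\bJ_{a,b}$ is the single-channel Jacobian of the filter $\bL_{a,b,:,:}$. Hence the $(a,b)$ block of $\bJ^{T}$ is $(\bJ_{b,a})^{T}$. On the other side, the defining index swap in \eqref{eq:conv2d_h} gives $\convtransposeoperator(\bL)_{a,b,:,:} = \convtransposeoperator(\bL_{b,a,:,:})$ in the single-channel sense, so the $(a,b)$ block of the Jacobian of $\convtransposeoperator(\bL)\star\bX$ is the single-channel Jacobian of $\convtransposeoperator(\bL_{b,a,:,:})$, which by the $m=1$ case equals $(\bJ_{b,a})^{T}$. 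The two block matrices agree block-by-block, proving the theorem.

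The main obstacle I anticipate is bookkeeping rather than mathematics: verifying that the specific vectorization convention $\overrightarrow{\cdot}$ (stacking rows, then channels) really makes the multi-channel Jacobian the block matrix $(\bJ_{a,b})$ in the order that turns the channel-swap $(a,b)\mapsto(b,a)$ of $\convtransposeoperator$ into an honest block transpose. I would nail this down by writing out how $\overrightarrow{\left(\bL \star \bX\right)}$ depends on $\overrightarrow{\bX}$ and checking that block row $a$ and block column $b$ are indexed consistently on both sides; once the indexing conventions are pinned down, the identity $(\bP^{(k)})^{T}=\bP^{(-k)}$ does all of the real work.
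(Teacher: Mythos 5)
Your proposal is correct and follows essentially the same route as the paper's proof: the same Kronecker-product expansion of $\bJ$ for $m=1$, the same key identity $(\bP^{(k)})^{T}=\bP^{(-k)}$ followed by the reindexing $i\mapsto -i$, $j\mapsto -j$, and the same reduction of the multi-channel case to $n^{2}\times n^{2}$ blocks using the channel swap built into $\convtransposeoperator$. The only cosmetic difference is that the paper works over $\mathbb{C}$ with the Hermitian transpose, of which your real statement is the special case where the conjugation is vacuous.
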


Next, we prove that any $\mathrm{2D}$ convolution filter $\bL$ whose Jacobian is a Skew-Symmetric matrix can be expressed as: $\bL = \bM - \convtransposeoperator(\bM)$ where $\bM$ has the same dimensions as $\bL$. This allows us to parametrize the set of all convolution filters with Skew-Symmetric Jacobian matrices. 
\begin{theorem}\label{thm:main_theorem}
	Consider a $\mathrm{2D}$ convolution filter $\bL \in \mathbb{R}^{m \times m \times (2p+1) \times (2q+1)}$ and input $\bX \in \mathbb{R}^{m \times n \times n}$. The Jacobian $\nabla_{\overrightarrow{\bX}} \ \overrightarrow{\left(\bL \star \bX\right)}$ is Skew-Symmetric if and only if:
	$$\bL = \bM - \convtransposeoperator(\bM) $$
	for some filter $\bM \in \mathbb{R}^{m \times m \times (2p+1) \times (2q+1)}$.
\end{theorem}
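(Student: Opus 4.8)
The plan is to reduce the biconditional to a single clean identity on filters --- namely that the Jacobian is Skew-Symmetric if and only if $\convtransposeoperator(\bL) = -\bL$ --- and then observe that this condition is exactly equivalent to the existence of the claimed decomposition. Two structural facts about the map $\bL \mapsto \bJ_{\bL} \triangleq \nabla_{\overrightarrow{\bX}} \ \overrightarrow{\left(\bL \star \bX\right)}$ drive everything. It is \emph{linear} in $\bL$, which is immediate from the sum representation $\bJ_{\bL} = \sum_{i,j} \bL_{0,0,p+i,q+j}\left(\bP^{(i)} \otimes \bP^{(j)}\right)$ given above (and from its block form when $m > 1$). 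And, by Theorem \ref{thm:2d_kernel}, it intertwines $\convtransposeoperator$ with matrix transpose: $\bJ_{\convtransposeoperator(\bL)} = \bJ_{\bL}^{T}$.

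First I would establish that $\bL \mapsto \bJ_{\bL}$ is \emph{injective}. For a single channel this follows because the matrices $\bP^{(i)} \otimes \bP^{(j)}$, for $-p \le i \le p$ and $-q \le j \le q$, have pairwise disjoint supports: the $(i,j)$ term places its nonzero entries exactly at the index pairs $\big((a,b),(c,d)\big)$ with $a - c = i$ and $b - d = j$, and each such matrix is nonzero whenever $n > \max(p,q)$. Hence these matrices are linearly independent, so the filter coefficients are uniquely recoverable from $\bJ_{\bL}$. For $m > 1$ the Jacobian splits into $m \times m$ blocks occupying disjoint regions of the matrix, the $(u,v)$ block being the single-channel Jacobian of $\bL_{u,v,:,:}$, so injectivity lifts directly. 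I expect this injectivity step to be the main obstacle, since it is where the size condition $n > \max(p,q)$ and the disjoint-support bookkeeping must be made precise.

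With injectivity in hand, the core equivalence is short: $\bJ_{\bL}$ is Skew-Symmetric iff $\bJ_{\bL} = -\bJ_{\bL}^{T} = -\bJ_{\convtransposeoperator(\bL)} = \bJ_{-\convtransposeoperator(\bL)}$, which by injectivity holds iff $\bL = -\convtransposeoperator(\bL)$. Here I use linearity to pull the sign inside the map and Theorem \ref{thm:2d_kernel} to rewrite the transpose as a $\convtransposeoperator$.

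Finally I would connect the condition $\convtransposeoperator(\bL) = -\bL$ to the claimed decomposition, using that $\convtransposeoperator$ is linear and an \emph{involution} --- a one-line check from equation \eqref{eq:conv2d_h}, since applying it twice sends index $(i,j,k,l)$ back to itself. If $\bL = \bM - \convtransposeoperator(\bM)$, then $\convtransposeoperator(\bL) = \convtransposeoperator(\bM) - \bM = -\bL$; conversely, if $\convtransposeoperator(\bL) = -\bL$, taking $\bM = \bL/2$ gives $\bM - \convtransposeoperator(\bM) = \bL/2 + \bL/2 = \bL$. Chaining this equivalence with the one from the previous paragraph yields the theorem.
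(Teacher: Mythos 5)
Your argument is correct, and it reaches the conclusion by a genuinely different route from the paper. Both proofs ultimately rest on the characterization that the Jacobian is Skew-Symmetric exactly when $\convtransposeoperator(\bL) = -\bL$, but you arrive at it globally via linearity and injectivity of the map $\bL \mapsto \bJ_{\bL}$ (the disjoint-support argument for the matrices $\bP^{(i)} \otimes \bP^{(j)}$, lifted block-wise for $m>1$), whereas the paper asserts the corresponding block identity $\bL^{(i,j)} = -\convtransposeoperator(\bL^{(j,i)})$ directly from the skew-symmetry of the Jacobian blocks --- a step that in fact tacitly relies on the same injectivity you make explicit, along with the hypothesis $n > \max(p,q)$ that you correctly flag. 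Where the two proofs diverge most is the decomposition step: you observe that $\convtransposeoperator$ is a linear involution on real filters, so its $(-1)$-eigenspace is exactly the image of $\bM \mapsto \bM - \convtransposeoperator(\bM)$, with $\bM = \bL/2$ as an explicit witness; the paper instead constructs a specific $\bM$ supported on the ``upper half'' of the filter (zero for blocks with $i>j$, half the center tap on the diagonal blocks) and verifies $\bL = \bM - \convtransposeoperator(\bM)$ through a three-case analysis with subcases. Your route is shorter and isolates the one hypothesis the paper leaves implicit; the paper's construction, while more laborious, exhibits a non-redundant choice of $\bM$ and thereby identifies which entries of a free parameter $\bM$ actually contribute to the parametrization --- information that is useful for implementation but not needed for the existence claim in the theorem.
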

We prove Theorems \ref{thm:2d_kernel} and \ref{thm:main_theorem} for the more general case of complex convolution filters ($\bL_{i,j,k,l} \in \mathbb{C}$) in Appendix Sections \ref{subsec:proofs_appendix_2d_kernel} and \ref{subsec:proofs_appendix_main_theorem}. Theorem \ref{thm:main_theorem} allow us to convert any arbitrary convolution filter into a filter with a Skew-Symmetric Jacobian. This leads to the following definition: 
	\begin{definition}{(\textbf{Skew-Symmetric Convolution Filter})}\label{def:skew_symmetric_filter}
		A convolution filter $\bL \in \mathbb{R}^{m \times m \times (2p+1) \times (2q+1)}$ is said to be Skew-Symmetric if given an input $\bX  \in \mathbb{R}^{m \times n \times n}$, the Jacobian matrix  $\nabla_{\overrightarrow{\bX}} \ \overrightarrow{\left(\bL \star \bX\right)}$ is Skew-Symmetric.
	\end{definition}
We note that although Theorem \ref{thm:main_theorem} requires the height and width of $\bM$ to be odd integers, we can also construct a Skew-Symmetric filter when $\bM$ has even height/width by zero padding $\bM$ to make the desired dimensions odd. 


\section{Skew Orthogonal Convolution layers}
In this section, we derive a method to approximate the exponential of the Jacobian of a Skew-Symmetric convolution filter (i.e. $\exp(\bJ)$). We also derive a bound on the approximation error. Given an input $\bX \in \mathbb{R}^{m \times n \times n}$ and a Skew-Symmetric convolution filter $\mathbf{L} \in \mathbb{R}^{m \times m \times k \times k}$ ($k$ is odd), let $\bJ$ be the Jacobian of convolution filter $\bL$ so that: 
\begin{align}
    \bJ\overrightarrow{\bX} = \overrightarrow{\left(\bL \star \bX\right)} \label{eq:conv_equiv}
\end{align}
By construction, we know that $\bJ$ is a Skew-Symmetric matrix, thus $\exp(\bJ)$ is an Orthogonal matrix. We are interested in computing $\exp\left(\bJ\right)\overrightarrow{\bX}$ efficiently where:
\begin{align}
    &\exp\left(\bJ\right)\overrightarrow{\bX} = \overrightarrow{\bX} + \frac{\bJ \overrightarrow{\bX}}{1!} + \frac{\bJ^{2} \overrightarrow{\bX}}{2!} + \frac{\bJ^{3} \overrightarrow{\bX}}{3!} + \cdots \nonumber 
\end{align}
Using equation \eqref{eq:conv_equiv}, the above expression can be written as:
\begin{align}
    &\exp\left(\bJ\right)\overrightarrow{\bX} = \overrightarrow{\bX} + \overrightarrow{\frac{\bL \star \bX}{1!}} + \overrightarrow{\frac{\bL \star^{2} \bX}{2!}} + \overrightarrow{\frac{\bL \star^{3} \bX}{3!}} + \cdots \nonumber
\end{align}
where the notation $\bL \star^{i} \bX \triangleq \bL \star^{i-1} (\bL \star \bX)$. Using the above equation, we define $\bL \star_{e} \bX$ as follows:
\begin{align}
    &\bL \star_{e}\bX=\bX+\frac{\bL \star \bX}{1!} + \frac{\bL \star^{2} \bX}{2!} + \frac{\bL \star^{3} \bX}{3!} + \cdots \label{eq:conv_exp_series} 
\end{align}
The above operation is called \textit{convolution exponential}, and was introduced by \citet{Hoogeboom2020TheCE}. By construction, $\bL \star_{e} \bX$ satisfies: $\exp\left(\bJ\right)\overrightarrow{\bX} = \overrightarrow{\bL \star_{e} \bX}$. Thus, the Jacobian of $\overrightarrow{\bL \star_{e} \bX}$ with respect to $\overrightarrow{\bX}$ is equal to $\exp(\bJ)$ which is Orthogonal (since $\bJ$ is Skew-Symmetric). However, $\bL \star_{e} \bX$ can only be approximated using a finite number of terms in the series given in equation \eqref{eq:conv_exp_series}. Thus, we need to bound the error of such an approximation.

\subsection{Bounding the Approximation Error}\label{subsec:approx_error}
To bound the approximation error using a finite number of terms, first note that since the Jacobian matrix $\bJ$ is Skew-Symmetric, all the eigenvalues are purely imaginary. For a purely imaginary scalar $\lambda \in \mathbb{C}$ \big(i.e. $\operatorname{Re}(\lambda)=0$\big), we first bound the error between $\exp(\lambda)$ and approximation $p_{k}(\lambda)$ computed using $k$ terms of the exponential series as follows:
\begin{align}
    \big| \exp(\lambda) - p_{k}(\lambda) \big| \leq \frac{|\lambda|^{k}}{k!},\qquad \forall\ \lambda: \operatorname{Re}(\lambda)=0 \label{eq:error_bound_scalar}
\end{align}
The above result then allows us to prove the following result for a Skew-Symmetric matrix in a straightforward manner:
\begin{theorem}\label{thm:approx_error}
    For Skew-Symmetric $\bJ$, we have the inequality: 
    \begin{align*}
	 &\|\exp(\bJ) - \bS_{k}(\bJ)\|_{2} \leq \frac{\|\bJ\|_{2}^{k}}{k!}\ \ \  \text{where}\ \bS_{k}(\bJ) = \sum_{i=0}^{k-1} \frac{\bJ^{i}}{i!}\quad \nonumber 
	\end{align*}
\end{theorem}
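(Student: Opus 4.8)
The plan is to diagonalize $\bJ$ and thereby reduce the matrix inequality to the scalar bound already stated in equation \eqref{eq:error_bound_scalar}. Since $\bJ$ is real and Skew-Symmetric, it satisfies $\bJ^{T} = -\bJ$, so $\bJ \bJ^{T} = \bJ^{T} \bJ = -\bJ^{2}$; that is, $\bJ$ is a normal matrix. By the spectral theorem for normal matrices, there exists a unitary $\bU$ (i.e. $\bU^{H}\bU = \bU\bU^{H} = \bI$) and a diagonal matrix $\bLambda = \mathrm{diag}(\lambda_{1}, \ldots, \lambda_{N})$ with $\bJ = \bU \bLambda \bU^{H}$. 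Moreover, the eigenvalues $\lambda_{j}$ of a real Skew-Symmetric matrix are purely imaginary, so $\operatorname{Re}(\lambda_{j}) = 0$ for every $j$ — exactly the regime in which the scalar bound \eqref{eq:error_bound_scalar} applies.

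Next I would push the diagonalization through both $\exp(\bJ)$ and $\bS_{k}(\bJ)$. Because $\bJ^{i} = \bU \bLambda^{i} \bU^{H}$ for every $i$, both the matrix exponential and its $k$-term truncation act on the spectrum entrywise:
$$\exp(\bJ) = \bU \exp(\bLambda) \bU^{H}, \qquad \bS_{k}(\bJ) = \bU \, \mathrm{diag}\big(p_{k}(\lambda_{1}), \ldots, p_{k}(\lambda_{N})\big) \, \bU^{H},$$
where $p_{k}(\lambda) = \sum_{i=0}^{k-1} \lambda^{i}/i!$ is the scalar polynomial from equation \eqref{eq:error_bound_scalar}. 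Subtracting, the difference is diagonalized by the same $\bU$:
$$\exp(\bJ) - \bS_{k}(\bJ) = \bU \, \mathrm{diag}\big(\exp(\lambda_{1}) - p_{k}(\lambda_{1}), \ldots, \exp(\lambda_{N}) - p_{k}(\lambda_{N})\big) \, \bU^{H}.$$

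Finally, I would invoke unitary invariance of the spectral norm: since $\bU$ is unitary, the singular values of $\exp(\bJ) - \bS_{k}(\bJ)$ are precisely the moduli of its diagonal entries, so $\|\exp(\bJ) - \bS_{k}(\bJ)\|_{2} = \max_{j} |\exp(\lambda_{j}) - p_{k}(\lambda_{j})|$. Applying the scalar bound \eqref{eq:error_bound_scalar} to each purely imaginary $\lambda_{j}$ gives $|\exp(\lambda_{j}) - p_{k}(\lambda_{j})| \leq |\lambda_{j}|^{k}/k!$, so the norm is at most $(\max_{j} |\lambda_{j}|)^{k}/k!$. To close the argument I would use that for a normal matrix the spectral norm equals the spectral radius, hence $\max_{j} |\lambda_{j}| = \|\bJ\|_{2}$, which yields $\|\exp(\bJ) - \bS_{k}(\bJ)\|_{2} \leq \|\bJ\|_{2}^{k}/k!$.

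I do not anticipate a serious obstacle: once the scalar inequality \eqref{eq:error_bound_scalar} is granted, the matrix statement follows by routine spectral-theorem bookkeeping. The only points requiring care are the two standard facts about real Skew-Symmetric matrices — that they are unitarily diagonalizable with purely imaginary spectrum, and that their spectral norm coincides with their spectral radius — both of which hold precisely because such matrices are normal.
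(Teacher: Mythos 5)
Your argument is correct and is essentially the paper's own proof: the appendix likewise diagonalizes the (skew-Hermitian) matrix as $\bU\Lambda\bU^{H}$, reduces the spectral norm of the error to $\max_{i}|\exp(\Lambda_{i,i})-p_{k}(\Lambda_{i,i})|$ by unitary invariance, applies the scalar bound \eqref{eq:error_bound_scalar}, and finishes using that normality makes the spectral norm equal the spectral radius. The only part you treat as given rather than prove is the scalar inequality itself, which the paper establishes by induction via an integral identity but which is indeed presented as a standing fact before the theorem.
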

A more general proof of Theorem \ref{thm:approx_error} (for $\bJ \in \mathbb{C}^{n \times n}$ and skew-Hermitian i.e.  $\bJ = -\bJ^{H}$) is given in Appendix Section \ref{subsec:proofs_appendix_approx_error}. The above theorem allows us to bound the approximation error between the true matrix exponential (which is Orthogonal) and its $k$ term approximation as a function of the number of terms ($k$) and the Jacobian norm $\|\bJ\|_{2}$. The factorial term in the denominator causes the error to decay very fast as the number of terms increases. We call the resulting algorithm \methodnamebold\ (\methodabv).

We emphasize that the above theorem is valid only for Skew-Symmetric matrices and hence not directly applicable for the convolution exponential \cite{Hoogeboom2020TheCE}. 

\begin{algorithm}[t]
\SetAlgoLined
\SetKwInput{KwInput}{Input}                
\SetKwInput{KwOutput}{Output}              
\SetKwInput{KwReturn}{Return}              
\DontPrintSemicolon
\KwInput{feature map: $\bX \in \mathbb{R}^{c_{i} \times n \times n}$,\ convolution filter: $\bM  \in \mathbb{R}^{m \times m \times h \times w}$ $\left(m = \max(c_{i},\ c_{o})\right)$ ,\ terms: $K$}
\KwOutput{output after applying convolution exponential: $\bY$}
\If {$c_{i} < c_{o}$} 
{
    $\bX' \gets \mathrm{pad}(\bX,\ (c_{o}-c_{i}, 0, 0))$
}
{
$\bL \gets \bM - \convtransposeoperator(\bM)$ \\
$\bL \gets \mathrm{spectral\_normalization}(\bL)$ \\
$\bY \gets \bX'$ \\
$\mathrm{factorial} \gets 1$ \\
}
\For{$j \gets 2$ to $K$}
{
    {
        $\bX' \gets \bL \star \bX'$\\ 
        $\mathrm{factorial} \gets \mathrm{factorial} * (j-1)$ \\
        $\bY \to \bY + \left(\bX'/\mathrm{factorial}\right)$ 
    }
}
\If {$c_{i} > c_{o}$} 
{
    $\bY \gets \bY[0:c_{o},\ :\ ,\ :\ ] $
}
\KwReturn{$\bY$}
\caption{\methodname}
\label{alg:shop}
\end{algorithm}

\subsection{Complete Set of Skew Orthogonal Convolutions}
Observe that for $\operatorname{Re}(\lambda)=0$, (i.e. $\lambda = \iota \theta,\ \theta \in \mathbb{R}$), we have: 
$$\ \ \ \exp(\lambda) = \exp(\lambda + 2 \iota \pi k) = \cos(\theta) + \iota \sin(\theta),\ \ k \in \mathbb{Z}$$
This suggests that we can shift $\lambda$ by integer multiples of $2\pi \iota$ without changing $\exp(\lambda)$ while reducing the approximation error (using Theorem \ref{thm:approx_error}). For example, $\exp(\iota \pi/3)$ requires fewer terms to achieve the desired approximation (using equation \eqref{eq:error_bound_scalar}) than say $\exp(\iota (\pi/3 + 2\pi))$ because the latter has higher norm (i.e. $2\pi + \pi/3 = 7\pi/3$) than the former (i.e. $\pi/3$). This insight leads to the following theorem:
\begin{theorem}\label{thm:approx_cyclic_real}
    Given a real Skew-Symmetric matrix $\bA$, we can construct another real Skew-Symmetric matrix $\bB$ such that $\bB$ satisfies: (i) $\exp(\bA)=\exp(\bB)$ and (ii) $\|\bB\|_{2} \leq \pi$.
\end{theorem}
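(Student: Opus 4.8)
The plan is to exploit the real canonical form of skew-symmetric matrices together with the $2\pi$-periodicity of the planar rotation that each block generates under the exponential map. First I would invoke the real orthogonal block-diagonalization of $\bA$: since $\bA$ is real skew-symmetric it is a normal matrix, so there exists a real orthogonal $\bQ$ (i.e. $\bQ^{T}\bQ = \bI$) with $\bA = \bQ \bD \bQ^{T}$, where $\bD$ is block-diagonal consisting of $2 \times 2$ blocks $\left[\begin{smallmatrix} 0 & \theta_{j} \\ -\theta_{j} & 0 \end{smallmatrix}\right]$ with $\theta_{j} \in \mathbb{R}$, together with zero diagonal entries accounting for the kernel of $\bA$. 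The scalars $\pm \iota \theta_{j}$ are exactly the nonzero eigenvalues of $\bA$, which are purely imaginary and occur in conjugate pairs precisely because $\bA$ is real and skew-symmetric.

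Next I would use the key observation that the exponential of each rotation block equals $\left[\begin{smallmatrix} \cos\theta_{j} & \sin\theta_{j} \\ -\sin\theta_{j} & \cos\theta_{j} \end{smallmatrix}\right]$, which is unchanged when $\theta_{j}$ is replaced by any $\theta_{j}'$ with $\theta_{j}' \equiv \theta_{j} \pmod{2\pi}$. Accordingly, for each $j$ I would pick the unique representative $\theta_{j}' \in (-\pi, \pi]$ congruent to $\theta_{j}$ modulo $2\pi$, so that $|\theta_{j}'| \leq \pi$. Let $\bD'$ be the block-diagonal matrix with the same structure as $\bD$ but with every $\theta_{j}$ replaced by $\theta_{j}'$ (and the zero entries retained), and set $\bB = \bQ \bD' \bQ^{T}$.

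Finally I would verify the three required properties. For skew-symmetry, $\bQ$ and $\bD'$ are real and $\bB^{T} = \bQ (\bD')^{T} \bQ^{T} = -\bQ \bD' \bQ^{T} = -\bB$, since each block of $\bD'$ is skew-symmetric. For the exponential identity, $\exp(\bB) = \bQ \exp(\bD') \bQ^{T}$ and the exponential of a block-diagonal matrix acts block-by-block, so it suffices that each block exponential coincides with the corresponding block of $\exp(\bD)$, which holds by the periodicity above; hence $\exp(\bB) = \bQ \exp(\bD) \bQ^{T} = \exp(\bA)$. For the norm bound, $\bB$ is normal, so its spectral norm equals the largest modulus of its eigenvalues; the eigenvalues of $\bB$ are exactly the $\pm \iota \theta_{j}'$ (and $0$), giving $\|\bB\|_{2} = \max_{j} |\theta_{j}'| \leq \pi$.

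The main obstacle is essentially organizational rather than deep: I must state cleanly (or cite) the real orthogonal block-diagonalization of skew-symmetric matrices and the identity $\|\bB\|_{2} = \max_{j}|\lambda_{j}|$ valid for normal $\bB$; once these are in place, the remainder is the elementary $2\pi$-periodicity argument. The only delicate point to note explicitly is the boundary case $\theta_{j}' = \pi$, where the reduction still yields $|\theta_{j}'| = \pi$, consistent with the claimed inequality $\|\bB\|_{2} \leq \pi$.
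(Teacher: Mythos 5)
Your proposal is correct and follows essentially the same route as the paper's proof: real orthogonal block-diagonalization of $\bA$ into $2\times 2$ rotation-generator blocks, reduction of each angle modulo $2\pi$ into an interval of length $2\pi$ centered at the origin, and verification of realness, skew-symmetry, the exponential identity, and the spectral-norm bound via normality. The only differences are cosmetic (your choice of $(-\pi,\pi]$ versus the paper's $[-\pi,\pi)$, and your correctly signed block exponential).
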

A proof is given in Appendix Section \ref{subsec:proofs_appendix_approx_cyclic_real}. This proves that every real Skew-Symmetric Jacobian matrix $\bJ$ (associated with some Skew-Symmetric convolution filter $\bL$) can be replaced with a Skew-Symmetric Jacobian $\bB$ such that $\exp\left(\bB\right) = \exp\left(\bJ\right)$ and $\|\bB\|_{2} \leq \pi$ \big(note that $\|\bJ\|_{2}$ can be arbitrarily large\big). This strictly reduces the approximation error (Theorem \ref{thm:approx_error}) without sacrificing the expressive power.

We make the following observations about Theorem \ref{thm:approx_cyclic_real}: (a) If $\bJ$ is equal to the Jacobian of some Skew-Symmetric convolution filter, $\bB$ may not satisfy this property, i.e. it may not exhibit the block doubly toeplitz structure of the Jacobian of a $\mathrm{2D}$ convolution filter \cite{sedghi2018singular} and thus may not equal the jacobian of some Skew-Symmetric convolution filter; (b) even if $\bB$ satisfies this property, the filter size of the Skew-Symmetric filter whose Jacobian equals $\bB$ can be very different from that of the filter with Jacobian $\bJ$.

In this sense, Theorem \ref{thm:approx_cyclic_real} cannot directly be used to parametrize the complete set of \methodabv\  because it is not clear how to efficiently parametrize the set of all matrices $\bB$ that satisfy (a) $\|\bB\|_{2} \leq \pi$ and (b) $\exp(\bB) = \exp(\bJ)$ where $\bJ$ is the Jacobian of some Skew-Symmetric convolution filter. We leave this question of efficient parametrization of \methodname\ layers open for future research.
\subsection{Extensions to $\mathrm{3D}$ and Complex Convolutions}\label{subsec:extensions}
When the matrix $\bA \in \mathbb{C}^{n \times n}$ is skew-Hermitian ($\bA = -\bA^{H}$), then $\exp(\bA)$ is a unitary matrix:
\begin{align*}
\exp(\bA) \left(\exp(\bA)\right)^{H} = \left(\exp(\bA)\right)^{H}\exp(\bA) = \bI
\end{align*}
To use the above property to construct a unitary convolution layer with complex weights, we first define:  

\begin{definition}{(\textbf{Skew-Hermitian Convolution Filter})}\label{def:skew_hermitian_filter}
A convolution filter $\bL \in \mathbb{C}^{m \times m \times (2p+1) \times (2q+1)}$ is said to be Skew-Hermitian if given an input $\bX \in \mathbb{C}^{m \times n \times n}$, the Jacobian matrix  $\nabla_{\overrightarrow{\bX}} \ \overrightarrow{\left(\bL \star \bX\right)}$ is Skew-Hermitian.
\end{definition}
Using the extensions of Theorems \ref{thm:2d_kernel} and \ref{thm:main_theorem} for complex convolution filters (proofs in Appendix Sections \ref{subsec:proofs_appendix_2d_kernel} and \ref{subsec:proofs_appendix_main_theorem}), we can construct a $\mathrm{2D}$ Skew-Hermitian convolution filter. Next, using an extension of Theorem \ref{thm:approx_error} for complex Skew-Hermitian matrices (proof in Appendix Section \ref{subsec:proofs_appendix_approx_error}), we can get exactly the same bound on the approximation error. The resulting algorithm is called \methodnamecomplexbold\  (\methodabvcomplex). We also prove an extension of Theorem \ref{thm:approx_cyclic_real} for complex Skew-Hermitian matrices in Appendix Section \ref{subsec:proofs_appendix_approx_cyclic_complex}. We discuss the construction of $\mathrm{3D}$ Skew-Hermitian convolution filters in Appendix Sections \ref{subsec:proofs_appendix_3d_kernel} and \ref{subsec:proofs_appendix_main_theorem_3d}.
\begin{figure}[t]
\centering
\includegraphics[trim=0cm 10cm 0cm 3cm, clip,width=0.6\linewidth]{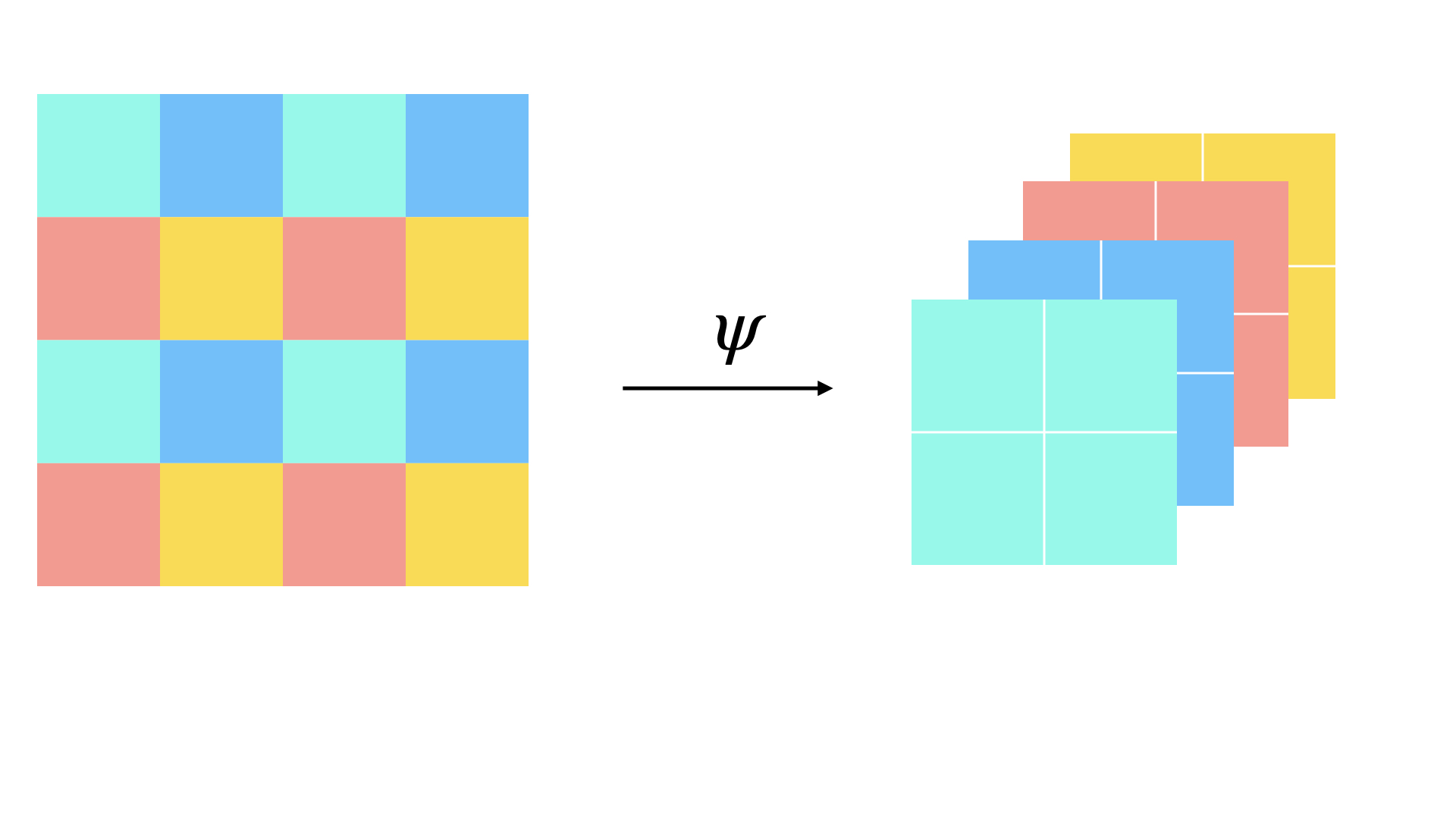}
\caption{Invertible downsampling operation $\psi$}
\label{fig:downsampling}
\end{figure}

\section{Implementation details of \methodabv}
In this section, we explain the key implementation details of \methodabv\ (summarized in Algorithm \ref{alg:shop}). 

\subsection{Bounding the norm of Jacobian}\label{subsec:bound_jacobian}
To bound the norm of the Jacobian of Skew-Symmetric convolution filter, we use the following result:
\begin{thm}{\cite{boundsingular}}\label{thm:sf_singular_bound}
	Consider a convolution filter $\bL \in \mathbb{R}^{c_{o} \times c_{i} \times h \times w}$ applied to input $\bX$. Let $\bJ$ be the Jacobian of $\bL \star \bX$ w.r.t $\bX$, we have the following inequality:
	$$\|\bJ\|_{2} \leq  \sqrt{hw}\ \min\left(\|\bR\|_{2}, \|\bS\|_{2}, \|\bT\|_{2}, \|\bU\|_{2}\right),$$
	where $\bR \in \mathbb{R}^{c_{o}h \times c_{i}w}, \bS \in \mathbb{R}^{c_{o}w \times c_{i}h}, \bT \in \mathbb{R}^{c_{o} \times c_{i}hw}$ and $\bU \in \mathbb{R}^{c_{o}hw \times c_{i}}$ are obtained by reshaping the filter $\bL$.
\end{thm}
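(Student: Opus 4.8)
The plan is to pass to the Fourier domain, where a stride-1 convolution block-diagonalizes, reducing the operator-norm bound to a uniform bound on the spectral norm of the filter's frequency response. Concretely, let $M(\omega_1,\omega_2) = \sum_{k=0}^{h-1}\sum_{l=0}^{w-1} \bL_{:,:,k,l}\, e^{-\iota(k\omega_1 + l\omega_2)} \in \mathbb{C}^{c_{o} \times c_{i}}$ denote the $c_{o} \times c_{i}$ frequency-response matrix at frequency $(\omega_1,\omega_2)$. The first step is to establish that $\|\bJ\|_{2} = \max_{\omega_1,\omega_2}\sigma_{\max}\big(M(\omega_1,\omega_2)\big)$ for a circular convolution, and that for the zero-padded convolution used in the paper the left-hand side is bounded above by $\sup_{\omega_1,\omega_2\in[0,2\pi)}\sigma_{\max}(M(\omega_1,\omega_2))$. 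Given this reduction, it suffices to bound $\sigma_{\max}(M(\omega_1,\omega_2))$ by $\sqrt{hw}\,\min(\|\bR\|_{2},\|\bS\|_{2},\|\bT\|_{2},\|\bU\|_{2})$ uniformly in $(\omega_1,\omega_2)$.

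For the core estimate I would write $\sigma_{\max}(M) = \max_{\|\bu\|_{2} = \|\bv\|_{2} = 1} |\bu^{H} M \bv|$ with $\bu \in \mathbb{C}^{c_{o}},\, \bv \in \mathbb{C}^{c_{i}}$, and expand
\begin{align*}
\bu^{H} M \bv = \sum_{a,b,k,l} \overline{\bu_{a}}\,\bL_{a,b,k,l}\, e^{-\iota(k\omega_1+l\omega_2)}\, \bv_{b}.
\end{align*}
Let $\bff \in \mathbb{C}^{hw}$ be the frequency vector with entries $\bff_{(k,l)} = e^{-\iota(k\omega_1+l\omega_2)}$, so $|\bff_{(k,l)}| = 1$ and $\|\bff\|_{2} = \sqrt{hw}$. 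Each of the four reshapings groups the indices differently, and in every case the bound follows by rewriting the sum as a bilinear form in the reshaped matrix and applying $|\bp^{H} \bA \bq| \le \|\bA\|_{2}\|\bp\|_{2}\|\bq\|_{2}$:
\begin{itemize}
\item For $\bT \in \mathbb{R}^{c_{o} \times c_{i}hw}$ with $\bT_{a,(b,k,l)} = \bL_{a,b,k,l}$, set $\bw_{(b,k,l)} = \bff_{(k,l)}\bv_{b}$; then $\bu^{H} M \bv = \bu^{H} \bT \bw$ with $\|\bw\|_{2} = \sqrt{hw}$, giving $|\bu^{H} M \bv| \le \sqrt{hw}\,\|\bT\|_{2}$.
\item For $\bU \in \mathbb{R}^{c_{o}hw \times c_{i}}$ with $\bU_{(a,k,l),b} = \bL_{a,b,k,l}$, the symmetric choice that absorbs $\bff$ and $\bu$ into a left vector $\bz$ of norm $\sqrt{hw}$ gives $\bu^{H} M \bv = \bz^{H} \bU \bv$ and hence $|\bu^{H} M \bv| \le \sqrt{hw}\,\|\bU\|_{2}$.
\item For $\bR$ and $\bS$ I would exploit the Kronecker factorization $\bff = \bg \otimes \bh$, where $\bg \in \mathbb{C}^{h},\, \bh \in \mathbb{C}^{w}$ have entries $e^{-\iota k \omega_1},\, e^{-\iota l \omega_2}$ and norms $\sqrt{h},\,\sqrt{w}$. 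With $\bR_{(a,k),(b,l)} = \bL_{a,b,k,l}$, setting $\bp_{(a,k)}$ from $\bu,\bg$ (norm $\sqrt{h}$) and $\bq_{(b,l)}$ from $\bv,\bh$ (norm $\sqrt{w}$) gives $|\bu^{H} M \bv| \le \sqrt{h}\sqrt{w}\,\|\bR\|_{2} = \sqrt{hw}\,\|\bR\|_{2}$, and the analogous grouping with $h,w$ interchanged yields the $\|\bS\|_{2}$ bound.
\end{itemize}
Since these hold for every $(\omega_1,\omega_2)$ and every unit $\bu,\bv$, taking the maximum over frequencies and the minimum over the four reshapings completes the bound.

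The main obstacle is the rigor of the first step rather than the bilinear estimates, which are routine once the index bookkeeping is fixed. For circular convolution the block-diagonalization by the $\mathrm{2D}$ DFT is exact, but the statement concerns the zero-padded, stride-1 convolution; here the operator is a finite section of the doubly-infinite block-Toeplitz operator, and one must argue that the norm of a finite section is at most the norm of the full operator, i.e. the continuous supremum $\sup_{\omega_1,\omega_2}\sigma_{\max}(M(\omega_1,\omega_2))$. A second point to check is that $\bR,\bS,\bT,\bU$ are real (the filter is real), so their spectral norms coincide with the real quantities in the statement even though the intermediate vectors $\bu,\bv,\bff$ are complex; this is harmless because for a real matrix the operator $2$-norm over complex unit vectors equals that over real unit vectors.
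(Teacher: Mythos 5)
This statement is imported: the paper cites it from \citet{boundsingular} and contains no proof of it anywhere in the main text or appendix, so there is nothing internal to compare your argument against. Judged on its own, your proof is correct. The bilinear estimates are all valid: for each of the four groupings the vector that absorbs the unimodular phases $e^{-\iota(k\omega_1+l\omega_2)}$ picks up exactly the factor $\sqrt{hw}$ (or $\sqrt{h}\cdot\sqrt{w}$ in the $\bR,\bS$ cases via the Kronecker split $\bff=\bg\otimes\bh$), and the bound $|\bp^{H}\bA\bq|\le\|\bA\|_{2}\|\bp\|_{2}\|\bq\|_{2}$ together with the fact that a real matrix has the same spectral norm over complex and real unit vectors finishes each case. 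The one step you flag as the obstacle is also the easiest to close rigorously, and more cheaply than via bi-infinite Toeplitz operators: the zero-padded stride-1 convolution matrix on an $n\times n$ input is a submatrix of the block-circulant convolution matrix on an $N\times N$ grid for any $N\ge n+\max(h,w)$ (the wrap-around entries land only in discarded rows and columns), the norm of a submatrix is at most the norm of the full matrix, the block-circulant is exactly block-diagonalized by the $\mathrm{2D}$ DFT with blocks $M(2\pi j/N,\,2\pi k/N)$, and the discrete maximum is dominated by $\sup_{\omega_1,\omega_2}\sigma_{\max}(M(\omega_1,\omega_2))$. With that substitution your argument is a complete and self-contained proof of the cited bound, obtained by uniformly bounding the Fourier symbol; whether it coincides with the route taken in \citet{boundsingular} cannot be checked from this paper, which only quotes the result.
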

Using the above theorem, we divide the Skew-Symmetric convolution filter by $\min\left(\|\bR\|_{2}, \|\bS\|_{2}, \|\bT\|_{2}, \|\bU\|_{2}\right)$ so that the spectral norm of the resulting filter is bounded by $\sqrt{hw}$. We next multiply the normalized filter with the hyperparameter, $0.7$ as we find that it allows faster convergence with no loss in performance. Unless specified, we use $h=w=3$ in all of our experiments resulting in the norm bound of $2.1$.  Note that while the above theorem also allows us to bound the Lipschitz constant of a convolution layer, for deep networks (say $40$ layers), the Lipschitz bound (assuming a $1$-Lipschitz activation function) would increase to $2.1^{40}=7.74 \times 10^{12}$. Thus, the above bound alone is unlikely to enforce a tight global Lipschitz constraint.


\subsection{Different input and output channels}\label{subsec:diff_in_out}
In general, we may want to construct an orthogonal convolution that maps from $c_{i}$ input channels to $c_{o}$ output channels where $c_{i} \neq c_{o}$. Consider the two cases:

\textbf{Case 1} ($c_{o} < c_{i}$): We construct a Skew-Symmetric convolution filter with $c_{i}$ channels. After applying the exponential, we select the first $c_{o}$ output channels from the output layer.

\textbf{Case 2} ($c_{o} > c_{i}$): We use a Skew-Symmetric convolution filter with $c_{o}$ channels. We zero pad the input with $c_{o} - c_{i}$ channels and then compute the convolution exponential. 

\subsection{Strided convolution}
Given an input $\bX \in \mathbb{R}^{c_{i} \times n \times n}$ ($n$ is even), we may want to construct an orthogonal convolution with output $\bY \in \mathbb{R}^{c_{o} \times (n/2) \times (n/2)}$ (i.e. an orthogonal convolution with stride $2$). To perform a strided convolution, we first apply \textit{invertible downsampling} $\psi$ as shown in Figure \ref{fig:downsampling} \cite{jacobsen2018irevnet} to construct $\bX' \in \mathbb{R}^{4c_{i} \times (n/2) \times (n/2)}$. Next, we apply convolution exponential to $\bX'$ using a Skew-Symmetric convolution filter with $4c_{i}$ input and $c_{o}$ output channels. 

\subsection{Number of terms for the approximation}
During training, we use $6$ terms to approximate the exponential function for speed. During evaluation, we use $12$ terms to ensure that the exponential of the Jacobian is sufficiently close to being an orthogonal matrix. 
\begin{table}[t]
\centering
\renewcommand{\arraystretch}{1.2} 
\begin{tabular}{ p{1.2cm}  p{3.4cm}  p{1.3cm} }
\hline
\multirow{2}{1cm}{\textbf{Output Size}} & \multirow{2}{3cm}{\textbf{Convolution layer}} & \multirow{2}{2cm}{\textbf{Repeats}} \\ 
  &  & \\
\hline
\multirow{2}{*}{$16 \times 16$} & conv $[3 \times 3, 32, 1]$ & $(n/5)-1$\\
& conv $[3 \times 3, 64, 2]$ & 1 \\
\hline
\multirow{2}{*}{$8 \times 8$} & conv $[3 \times 3, 64, 1]$ & $(n/5)-1$\\
& conv $[3 \times 3, 128, 2]$ & 1 \\
\hline
\multirow{2}{*}{$4 \times 4$} & conv $[3 \times 3, 128, 1]$ & $(n/5)-1$\\
& conv $[3 \times 3, 256, 2]$ & 1 \\
\hline
\multirow{2}{*}{$2 \times 2$} & conv $[3 \times 3, 256, 1]$ & $(n/5)-1$\\
& conv $[3 \times 3, 512, 2]$ & 1 \\
\hline
\multirow{2}{*}{$1 \times 1$} & conv $[3 \times 3, 512, 1]$ & $(n/5)-1$\\
& conv $[1 \times 1, 1024, 2]$ & 1 \\
\hline
\end{tabular}
\caption{\archname-n Architecture. Each convolution layer is followed by the $\mathrm{MaxMin}$ activation.}
\label{table:architecture}
\end{table}
\subsection{Network architecture}\label{subsec:lipnet_arch}
We design a provably 1-Lipschitz architecture called \archname-$n$ ($n$ is the number of convolution layers and a multiple of $5$ in our experiments). It consists of $(n/5)-1$ Orthogonal convolutions of stride 1 (followed by the $\mathrm{MaxMin}$ activation function), followed by Orthogonal convolution of stride 2 (again followed by the $\mathrm{MaxMin}$). It is summarized in Table \ref{table:architecture}. conv $[k \times k, m, s]$ denotes convolution layer with filter of size $k \times k$, out channels $m$ and stride $s$. It is followed by a fully connected layer to output the class logits. The $\mathrm{MaxMin}$ activation function \cite{Anil2018SortingOL} is described in Appendix Section \ref{sec:appendix_maxmin}. 


\begin{table*}[t]
\centering
\renewcommand{\arraystretch}{1.2} 
\begin{tabular}{ p{2.5cm}  p{1cm} | p{1.5cm}  p{1.5cm}  p{1.5cm} | p{1.5cm}  p{1.5cm}  p{1.5cm} }
\hline
\multirow{3}{*}{\textbf{Model}} & \multirow{3}{1cm}{\textbf{Conv. Type}} & \multicolumn{3}{c|}{\textbf{CIFAR-10}} &  \multicolumn{3}{c}{\textbf{CIFAR-100}} \\
  &  & \textbf{Standard Accuracy} & \textbf{Robust Accuracy} & \textbf{Time per epoch (s)} & \textbf{Standard Accuracy} & \textbf{Robust Accuracy} & \textbf{Time per epoch (s)}\\
\hline
\multirow{2}{*}{\archname-5} & BCOP & 74.35\% & 58.01\% & 96.153 & 42.61\% & \textbf{28.67\%} & 94.463 \\
& \textbf{\methodabv} & \textbf{75.78\%} & \textbf{59.16\%} & \textbf{31.096} & \textbf{42.73\%} & 27.82\% & \textbf{30.844} \\
\hline
\multirow{2}{*}{\archname-10} & BCOP & 74.47\% & 58.48\% & 122.115 & 42.08\% & 27.75\% & 119.038 \\
& \textbf{\methodabv} & \textbf{76.48\%} & \textbf{60.82\%} & \textbf{48.242} & \textbf{43.71\%} & \textbf{29.39\%} & \textbf{48.363}\\
\hline
\multirow{2}{*}{\archname-15} & BCOP & 73.86\% & 57.39\% & 145.944 & 39.98\% & 26.17\% & 144.173 \\
& \textbf{\methodabv} & \textbf{76.68\%} & \textbf{61.30\%} & \textbf{63.742} & \textbf{42.93\%} & \textbf{28.79\%} & \textbf{63.540} \\
\hline
\multirow{2}{*}{\archname-20} & BCOP & 69.84\% & 52.10\% & 170.009 & 36.13\% & 22.50\% & 172.266 \\
& \textbf{\methodabv} & \textbf{76.43\%} & \textbf{61.92\%} & \textbf{77.226} & \textbf{43.07\%} & \textbf{29.18\%} & \textbf{76.460} \\
\hline
\multirow{2}{*}{\archname-25} & BCOP & 68.26\% & 49.92\% & 207.359 & 28.41\% & 16.34\% & 205.313 \\
& \textbf{\methodabv} & \textbf{75.19\%} & \textbf{60.18\%} & \textbf{98.534} & \textbf{43.31\%} & \textbf{28.59\%} & \textbf{95.950} \\
\hline
\multirow{2}{*}{\archname-30} & BCOP & 64.11\% & 43.39\% & 227.916 & 26.87\% & 14.03\% & 229.840 \\
& \textbf{\methodabv} & \textbf{74.47\%} & \textbf{59.04\%} & \textbf{110.531} & \textbf{42.90\%} & \textbf{28.74\%} & \textbf{107.163} \\
\hline
\multirow{2}{*}{\archname-35} & BCOP & 63.05\% & 41.72\% & 267.272 & 21.71\% & 10.33\% & 274.256 \\
& \textbf{\methodabv} & \textbf{73.70\%} & \textbf{58.44\%} & \textbf{130.671} & \textbf{42.44\%} & \textbf{28.31\%} & \textbf{126.368} \\
\hline
\multirow{2}{*}{\archname-40} & BCOP & 60.17\% & 38.87\% & 295.350 & 19.97\% & 8.66\% & 289.369 \\
& \textbf{\methodabv} & \textbf{71.63\%} & \textbf{54.36\%} & \textbf{144.556} & \textbf{41.83\%} & \textbf{27.98\%} & \textbf{140.458}\\
\hline
\end{tabular}
\caption{Results for provable robustness against adversarial examples ($l_{2}$ perturbation radius of 36/255). Time per epoch is the training time per epoch (in seconds).}
\label{table:provable_robust_acc}
\end{table*}

\section{Experiments}
Our goal is to evaluate the expressiveness of our method (\methodabv) compared to BCOP for constructing Orthogonal convolutional layers. To study this, we perform experiments in three settings: (a) provably robust image classification, (b) standard training and (c) adversarial training. 

All experiments were performed using $1$ NVIDIA GeForce RTX 2080 Ti GPU. All networks were trained for 200 epochs with an initial learning rate 0.1, dropped by a factor of 0.1 after 50 and 150 epochs. We use no weight decay for training with BCOP convolution as it significantly reduces its performance. For training with standard convolution and \methodabv, we use a weight decay of $10^{-4}$.

To evaluate the approximation error for \methodabv\ at convergence (using Theorem \ref{thm:approx_error}), we compute the norm of the Jacobian of the Skew-Symmetric convolution filter using real normalization \cite{realsn}. We observe that the maximum norm  (across different experiments and layers of the network) is below $1.8$ (i.e. slightly below the theoretical upper bound of $2.1$ discussed in Section \ref{subsec:bound_jacobian}) resulting in a maximum error of $1.8^{12}/12! = 2.415 \times 10^{-6}$.

\subsection{Provable Defenses against Adversarial Attacks}\label{subsec:provable}
To certify provable robustness of $1$-Lipschitz network $f$ for some input $\bx$, we first define the margin of prediction: $\mathcal{M}_f(\bx) = \max\left(0, y_{t}-\max_{i \neq t} y_{i} \right)$ where $\by = [y_{1},y_{2,\dotsc} ]$ is the predicted logits from $f$ on $\bx$ and $y_{t}$ is the correct logit. Using Theorem 7 in \citet{li2019lconvnet}, we can derive the robustness certificate as $\mathcal{M}_f(\bx)/\sqrt{2}$. The provable robust accuracy, evaluated using an $l_{2}$ perturbation radius of $36/255$ (same as in \citet{li2019lconvnet}) equals the fraction of data points ($\bx$) in the test dataset satisfying $\mathcal{M}_f(\bx)/\sqrt{2} \geq 36/255$. 

In Table \ref{table:provable_robust_acc}, we show the results of our experiments using different \archname\ architectures with varying number of layers on CIFAR-10 and CIFAR-100 datasets. We make the following observations: (a) \methodabv\  achieves significantly higher standard and provable robust accuracy than BCOP for different architectures and datasets, (b) \methodabv\ requires significantly less training time per epoch than BCOP and (c) as the number of layers increases, the performance of BCOP degrades rapidly but that of \methodabv\ remains largely consistent. For example, on a \archname-40 architecture, \methodabv\ achieves $11.46\%$ higher standard accuracy; $15.49\%$ higher provable robust accuracy on the CIFAR-10 dataset and $21.86\%$ higher standard accuracy; $19.32\%$ higher provable robust accuracy on the CIFAR-100 dataset. We further emphasize that none of the other well known deterministic provable defenses (discussed in Section \ref{sec:related_work}) are scalable to large networks as the ones in Table \ref{table:provable_robust_acc}. BCOP, while scalable, achieves significantly lower standard and provable robust accuracies for deep networks than \methodabv. 

\begin{table*}[!ht]
\centering
\renewcommand{\arraystretch}{1.2} 
\begin{tabular}{  p{1.5cm}  p{2cm} | p{1.5cm}  p{1.5cm} | p{1.5cm}  p{1.5cm}  }
\hline
\multirow{3}{*}{\textbf{Model}} & \multirow{3}{2cm}{\textbf{Conv. Type}} & \multicolumn{2}{c|}{\textbf{CIFAR-10}} &  \multicolumn{2}{c}{\textbf{CIFAR-100}} \\
  &  & \textbf{Standard Accuracy} & \textbf{Time per epoch (s)} & \textbf{Standard Accuracy} & \textbf{Time per epoch (s)}\\
\hline
\multirow{3}{*}{Resnet-18} & Standard & 95.10\% & 13.289 & 77.60\% & 13.440 \\
\cline{2-6}
& BCOP & 92.38\% & 128.383 & 71.16\% & 128.146 \\
& \textbf{\methodabv} & \textbf{94.24\%} & \textbf{110.750} & \textbf{74.55\%} & \textbf{103.633} \\
\hline
\multirow{3}{*}{Resnet-34} & Standard & 95.54\% & 22.348 & 78.60\% & 22.806 \\
\cline{2-6}
& BCOP & 93.79\% & 237.068 & 73.38\%  & 235.367 \\
& \textbf{\methodabv} & \textbf{94.44\%} & \textbf{170.864} & \textbf{75.52\%} & \textbf{164.178} \\
\hline
\multirow{2}{*}{Resnet-50} & Standard & 95.47\% & 38.834 & 78.11\% & 37.454 \\
\cline{2-6}
& \textbf{\methodabv} & \textbf{94.68\%} & \textbf{584.762} & \textbf{77.95\%} & \textbf{597.297} \\
\hline
\end{tabular}
\caption{Results for standard accuracy. For Resnet-50, we observe OOM (Out Of Memory) error when using BCOP. }
\label{table:standard_acc}
\end{table*}

\begin{table*}[ht!]
\centering
\renewcommand{\arraystretch}{1.2} 
\begin{tabular}{ p{1.5cm}  p{2cm} | p{1.5cm}  p{1.5cm}  p{1.5cm} | p{1.5cm}  p{1.5cm}  p{1.5cm} }
\hline
\multirow{3}{*}{\textbf{Model}} & \multirow{3}{*}{\textbf{Conv. Type}} & \multicolumn{3}{c|}{\textbf{CIFAR-10}} &  \multicolumn{3}{c}{\textbf{CIFAR-100}} \\
  &  & \textbf{Standard Accuracy} & \textbf{Robust Accuracy} & \textbf{Time per epoch (s)} & \textbf{Standard Accuracy} & \textbf{Robust Accuracy} & \textbf{Time per epoch (s)}\\
\hline
\multirow{3}{*}{Resnet-18} & Standard & 83.05\% & 44.39\% & 28.139 & 59.87\% & 22.78\% & 28.147 \\
\cline{2-8}
 & BCOP & 79.26\% & 34.85\% & 264.694 & 54.80\% & 16.00\% & 252.868 \\
& \textbf{\methodabv} & \textbf{82.24\%} & \textbf{43.73\%} & \textbf{203.860} & \textbf{58.95\%} & \textbf{22.65\%} & \textbf{199.188} \\
\hline
\end{tabular}
\caption{Results for empirical robustness against adversarial examples ($l_{\infty}$ perturbation radius of 8/255).}
\label{table:emp_robust_acc}
\end{table*}

\subsection{Standard Training}
For standard training, we perform experiments using Resnet-18, Resnet-34 and Resnet-50 architectures on CIFAR-10 and CIFAR-100 datasets. Results are presented in Table \ref{table:standard_acc}. We again observe that \methodabv\  achieves higher standard accuracy than BCOP on different architectures and datasets while requiring significantly less time to train. For Resnet-50, the performance of \methodabv\ almost matches that of standard convolution layers while BCOP results in an Out Of Memory (OOM) error. However, for Resnet-18 and Resnet-34, the difference is not as significant as the one observed for \archname\ architectures in Table \ref{table:provable_robust_acc}. We conjecture that this is because the residual connections allows the gradient to flow relatively freely compared to being restricted to flow through the convolution layers in \archname\ architectures. 

\subsection{Adversarial Training}
For adversarial training, we use a threat model with an $l_{\infty}$ attack radius of $8/255$.  Note that we use the $l_{\infty}$ threat model (instead of $l_{2}$) because it is known to be a stronger adversarial threat model for evaluating empirical robustness \cite{madry2018towards}. For training, we use the FGSM variant by \citet{Wong2020Fast}. For evaluation, we use 50 iterations of PGD with step size of $2/255$ and 10 random restarts. Results are presented in Table \ref{table:emp_robust_acc}. We observe that for Resnet-18 architecture and on both CIFAR-10 and CIFAR-100 datasets, \methodabv\ results in significantly improved standard and empirical robust accuracy compared to BCOP while requiring significantly less time to train. The performance of \methodabv\ comes close to the performance of a standard convolution layer with the difference being less than $1\%$ for both standard and robust accuracy on both the datasets. 

\section{Discussion and Future work}
In this work, we design a new orthogonal convolution layer by first constructing a Skew-Symmetric convolution filter and then applying the convolution exponential \cite{Hoogeboom2020TheCE} to the filter. We also derive provable guarantees on the approximation of the exponential using a finite number of terms. Our method achieves significantly higher accuracy than BCOP for various network architectures and datasets under standard, adversarial and provably robust training setups while requiring less training time per epoch. 
We suggest the following directions for future research:

\textbf{Reducing the evaluation time}: While \methodabv\ requires less time to train than BCOP, it requires more time for evaluation because the convolution filter needs to be applied multiple times to approximate the orthogonal matrix with the desired error. In contrast, BCOP constructs an orthogonal convolution filter that needs to be applied only once during evaluation. From Theorem \ref{thm:approx_error}, we know that we can reduce the number of terms required to achieve the desired approximation error by reducing the Jacobian norm $\|\bJ\|_{2}$. Training approaches such as spectral norm regularization \cite{boundsingular} and singular value clipping \cite{sedghi2018singular} can be useful to further lower $\|\bJ\|_{2}$ and thus reduce the evaluation time. 




\textbf{Complete Set of \methodabv\ convolutions}: While Theorem \ref{thm:approx_cyclic_real} suggests that the complete set of \methodabv\ convolutions can be constructed from a subset of Skew-Symmetric matrices $\bB$ that satisfy (a) $\|\bB\|_{2} \leq \pi$ and (b) $\exp(\bB) = \exp(\bA)$ where $\bA$ is the Jacobian of some Skew-Symmetric convolution filter, it is an open question how to efficiently parametrize this subset for training Lipschitz convolutional neural networks. This remains an interesting problem for future research.

\section{Acknowledgements}
This project was supported in part by NSF CAREER AWARD 1942230, HR001119S0026, HR00112090132, NIST 60NANB20D134  and Simons Fellowship on "Foundations of Deep Learning."




\bibliography{example_paper}

\begin{thebibliography}{42}
\providecommand{\natexlab}[1]{#1}
\providecommand{\url}[1]{\texttt{#1}}
\expandafter\ifx\csname urlstyle\endcsname\relax
  \providecommand{\doi}[1]{doi: #1}\else
  \providecommand{\doi}{doi: \begingroup \urlstyle{rm}\Url}\fi

\bibitem[Anil et~al.(2018)Anil, Lucas, and Grosse]{Anil2018SortingOL}
Anil, C., Lucas, J., and Grosse, R.~B.
\newblock Sorting out lipschitz function approximation.
\newblock In \emph{ICML}, 2018.

\bibitem[Arjovsky et~al.(2017)Arjovsky, Chintala, and
  Bottou]{arjovsky2017wasserstein}
Arjovsky, M., Chintala, S., and Bottou, L.
\newblock {W}asserstein generative adversarial networks.
\newblock In Precup, D. and Teh, Y.~W. (eds.), \emph{Proceedings of the 34th
  International Conference on Machine Learning}, volume~70 of \emph{Proceedings
  of Machine Learning Research}, pp.\  214--223, International Convention
  Centre, Sydney, Australia, 06--11 Aug 2017. PMLR.
\newblock URL \url{http://proceedings.mlr.press/v70/arjovsky17a.html}.

\bibitem[Bartlett et~al.(2017)Bartlett, Foster, and
  Telgarsky]{Bartlettgeneralization}
Bartlett, P.~L., Foster, D.~J., and Telgarsky, M.
\newblock Spectrally-normalized margin bounds for neural networks.
\newblock In \emph{Proceedings of the 31st International Conference on Neural
  Information Processing Systems}, NIPS'17, pp.\  6241--6250, USA, 2017. Curran
  Associates Inc.
\newblock ISBN 978-1-5108-6096-4.
\newblock URL \url{http://dl.acm.org/citation.cfm?id=3295222.3295372}.

\bibitem[Cao \& Gong(2017)Cao and Gong]{Cao2017MitigatingEA}
Cao, X. and Gong, N.~Z.
\newblock Mitigating evasion attacks to deep neural networks via region-based
  classification.
\newblock In \emph{Proceedings of the 33rd Annual Computer Security
  Applications Conference}, ACSAC 2017, pp.\  278–287, New York, NY, USA,
  2017. Association for Computing Machinery.
\newblock ISBN 9781450353458.
\newblock \doi{10.1145/3134600.3134606}.
\newblock URL \url{https://doi.org/10.1145/3134600.3134606}.

\bibitem[Ciss{\'{e}} et~al.(2017)Ciss{\'{e}}, Bojanowski, Grave, Dauphin, and
  Usunier]{cisseparseval2017}
Ciss{\'{e}}, M., Bojanowski, P., Grave, E., Dauphin, Y.~N., and Usunier, N.
\newblock Parseval networks: Improving robustness to adversarial examples.
\newblock In Precup, D. and Teh, Y.~W. (eds.), \emph{Proceedings of the 34th
  International Conference on Machine Learning, {ICML} 2017, Sydney, NSW,
  Australia, 6-11 August 2017}, volume~70 of \emph{Proceedings of Machine
  Learning Research}, pp.\  854--863. {PMLR}, 2017.
\newblock URL \url{http://proceedings.mlr.press/v70/cisse17a.html}.

\bibitem[Cohen et~al.(2019)Cohen, Rosenfeld, and Kolter]{Cohen2019CertifiedAR}
Cohen, J.~M., Rosenfeld, E., and Kolter, J.~Z.
\newblock Certified adversarial robustness via randomized smoothing.
\newblock In \emph{ICML}, 2019.

\bibitem[Croce et~al.(2019)Croce, Andriushchenko, and Hein]{croce2019provable}
Croce, F., Andriushchenko, M., and Hein, M.
\newblock Provable robustness of relu networks via maximization of linear
  regions.
\newblock \emph{AISTATS 2019}, 2019.

\bibitem[Edelman et~al.(1998)Edelman, Arias, and Smith]{Edelman1998TheGO}
Edelman, A., Arias, T.~A., and Smith, S.
\newblock The geometry of algorithms with orthogonality constraints.
\newblock \emph{SIAM J. Matrix Anal. Appl.}, 20:\penalty0 303--353, 1998.

\bibitem[Gouk et~al.(2020)Gouk, Frank, Pfahringer, and
  Cree]{gouk2020regularisation}
Gouk, H., Frank, E., Pfahringer, B., and Cree, M.~J.
\newblock Regularisation of neural networks by enforcing lipschitz continuity,
  2020.

\bibitem[Gulrajani et~al.(2017)Gulrajani, Ahmed, Arjovsky, Dumoulin, and
  Courville]{gulrajani2017improved}
Gulrajani, I., Ahmed, F., Arjovsky, M., Dumoulin, V., and Courville, A.~C.
\newblock Improved training of wasserstein gans.
\newblock In Guyon, I., Luxburg, U.~V., Bengio, S., Wallach, H., Fergus, R.,
  Vishwanathan, S., and Garnett, R. (eds.), \emph{Advances in Neural
  Information Processing Systems}, volume~30, pp.\  5767--5777. Curran
  Associates, Inc., 2017.
\newblock URL
  \url{https://proceedings.neurips.cc/paper/2017/file/892c3b1c6dccd52936e27cbd0ff683d6-Paper.pdf}.

\bibitem[Hoogeboom et~al.(2020)Hoogeboom, Satorras, Tomczak, and
  Welling]{Hoogeboom2020TheCE}
Hoogeboom, E., Satorras, V.~G., Tomczak, J., and Welling, M.
\newblock The convolution exponential and generalized sylvester flows.
\newblock \emph{ArXiv}, abs/2006.01910, 2020.

\bibitem[Jacobsen et~al.(2018)Jacobsen, Smeulders, and
  Oyallon]{jacobsen2018irevnet}
Jacobsen, J.-H., Smeulders, A.~W., and Oyallon, E.
\newblock i-revnet: Deep invertible networks.
\newblock In \emph{International Conference on Learning Representations}, 2018.
\newblock URL \url{https://openreview.net/forum?id=HJsjkMb0Z}.

\bibitem[Kumar et~al.(2020)Kumar, Levine, Goldstein, and
  Feizi]{cursedimensionalitykumar20}
Kumar, A., Levine, A., Goldstein, T., and Feizi, S.
\newblock Curse of dimensionality on randomized smoothing for certifiable
  robustness.
\newblock In III, H.~D. and Singh, A. (eds.), \emph{Proceedings of the 37th
  International Conference on Machine Learning}, volume 119 of
  \emph{Proceedings of Machine Learning Research}, pp.\  5458--5467. PMLR,
  13--18 Jul 2020.
\newblock URL \url{http://proceedings.mlr.press/v119/kumar20b.html}.

\bibitem[L{\'e}cuyer et~al.(2018)L{\'e}cuyer, Atlidakis, Geambasu, Hsu, and
  Jana]{Lcuyer2018CertifiedRT}
L{\'e}cuyer, M., Atlidakis, V., Geambasu, R., Hsu, D., and Jana, S. K.~K.
\newblock Certified robustness to adversarial examples with differential
  privacy.
\newblock In \emph{IEEE S\&P 2019}, 2018.

\bibitem[Levine et~al.(2019)Levine, Singla, and Feizi]{levine2019certifiably}
Levine, A., Singla, S., and Feizi, S.
\newblock Certifiably robust interpretation in deep learning, 2019.

\bibitem[Li et~al.(2019{\natexlab{a}})Li, Chen, Wang, and
  Carin]{Li2018CertifiedAR}
Li, B., Chen, C., Wang, W., and Carin, L.
\newblock Certified adversarial robustness with additive noise.
\newblock In Wallach, H., Larochelle, H., Beygelzimer, A., d'~Alch'e-Buc, F.,
  Fox, E., and Garnett, R. (eds.), \emph{Advances in Neural Information
  Processing Systems}, volume~32, pp.\  9464--9474. Curran Associates, Inc.,
  2019{\natexlab{a}}.
\newblock URL
  \url{https://proceedings.neurips.cc/paper/2019/file/335cd1b90bfa4ee70b39d08a4ae0cf2d-Paper.pdf}.

\bibitem[Li et~al.(2019{\natexlab{b}})Li, Haque, Anil, Lucas, Grosse, and
  Jacobsen]{li2019lconvnet}
Li, Q., Haque, S., Anil, C., Lucas, J., Grosse, R., and Jacobsen, J.-H.
\newblock Preventing gradient attenuation in lipschitz constrained
  convolutional networks.
\newblock \emph{Conference on Neural Information Processing Systems},
  2019{\natexlab{b}}.

\bibitem[Liu et~al.(2018)Liu, Cheng, Zhang, and Hsieh]{Liu2018TowardsRN}
Liu, X., Cheng, M., Zhang, H., and Hsieh, C.
\newblock Towards robust neural networks via random self-ensemble.
\newblock In \emph{ECCV}, 2018.

\bibitem[Long \& Sedghi(2020)Long and Sedghi]{long2019sizefree}
Long, P.~M. and Sedghi, H.
\newblock Generalization bounds for deep convolutional neural networks.
\newblock In \emph{International Conference on Learning Representations}, 2020.
\newblock URL \url{https://openreview.net/forum?id=r1e_FpNFDr}.

\bibitem[Madry et~al.(2018)Madry, Makelov, Schmidt, Tsipras, and
  Vladu]{madry2018towards}
Madry, A., Makelov, A., Schmidt, L., Tsipras, D., and Vladu, A.
\newblock Towards deep learning models resistant to adversarial attacks.
\newblock In \emph{International Conference on Learning Representations}, 2018.
\newblock URL \url{https://openreview.net/forum?id=rJzIBfZAb}.

\bibitem[Miyato et~al.(2018)Miyato, Kataoka, Koyama, and
  Yoshida]{miyato2018spectral}
Miyato, T., Kataoka, T., Koyama, M., and Yoshida, Y.
\newblock Spectral normalization for generative adversarial networks.
\newblock In \emph{International Conference on Learning Representations}, 2018.
\newblock URL \url{https://openreview.net/forum?id=B1QRgziT-}.

\bibitem[Peyré \& Cuturi(2018)Peyré and Cuturi]{peyre2018computational}
Peyré, G. and Cuturi, M.
\newblock Computational optimal transport, 2018.

\bibitem[Qian \& Wegman(2019)Qian and Wegman]{qian2018lnonexpansive}
Qian, H. and Wegman, M.~N.
\newblock L2-nonexpansive neural networks.
\newblock In \emph{International Conference on Learning Representations}, 2019.
\newblock URL \url{https://openreview.net/forum?id=ByxGSsR9FQ}.

\bibitem[Raghunathan et~al.(2018)Raghunathan, Steinhardt, and
  Liang]{Raghunathan2018SemidefiniteRF}
Raghunathan, A., Steinhardt, J., and Liang, P.
\newblock Semidefinite relaxations for certifying robustness to adversarial
  examples.
\newblock In \emph{NeurIPS}, 2018.

\bibitem[Ryu et~al.(2019)Ryu, Liu, Wang, Chen, Wang, and Yin]{realsn}
Ryu, E., Liu, J., Wang, S., Chen, X., Wang, Z., and Yin, W.
\newblock Plug-and-play methods provably converge with properly trained
  denoisers.
\newblock In Chaudhuri, K. and Salakhutdinov, R. (eds.), \emph{Proceedings of
  the 36th International Conference on Machine Learning}, volume~97 of
  \emph{Proceedings of Machine Learning Research}, pp.\  5546--5557, Long
  Beach, California, USA, 09--15 Jun 2019. PMLR.
\newblock URL \url{http://proceedings.mlr.press/v97/ryu19a.html}.

\bibitem[Salman et~al.(2019)Salman, Li, Razenshteyn, Zhang, Zhang, Bubeck, and
  Yang]{Salman2019ProvablyRD}
Salman, H., Li, J., Razenshteyn, I., Zhang, P., Zhang, H., Bubeck, S., and
  Yang, G.
\newblock Provably robust deep learning via adversarially trained smoothed
  classifiers.
\newblock In Wallach, H., Larochelle, H., Beygelzimer, A., d~Alch'e-Buc, F.,
  Fox, E., and Garnett, R. (eds.), \emph{Advances in Neural Information
  Processing Systems}, volume~32, pp.\  11292--11303. Curran Associates, Inc.,
  2019.
\newblock URL
  \url{https://proceedings.neurips.cc/paper/2019/file/3a24b25a7b092a252166a1641ae953e7-Paper.pdf}.

\bibitem[Sedghi et~al.(2019)Sedghi, Gupta, and Long]{sedghi2018singular}
Sedghi, H., Gupta, V., and Long, P.~M.
\newblock The singular values of convolutional layers.
\newblock In \emph{International Conference on Learning Representations}, 2019.
\newblock URL \url{https://openreview.net/forum?id=rJevYoA9Fm}.

\bibitem[Singh et~al.(2018)Singh, Gehr, Mirman, P{\"u}schel, and
  Vechev]{Singh2018FastAE}
Singh, G., Gehr, T., Mirman, M., P{\"u}schel, M., and Vechev, M.~T.
\newblock Fast and effective robustness certification.
\newblock In \emph{NeurIPS}, 2018.

\bibitem[Singla \& Feizi(2020)Singla and Feizi]{2020curvaturebased}
Singla, S. and Feizi, S.
\newblock Second-order provable defenses against adversarial attacks.
\newblock In III, H.~D. and Singh, A. (eds.), \emph{Proceedings of the 37th
  International Conference on Machine Learning}, volume 119 of
  \emph{Proceedings of Machine Learning Research}, pp.\  8981--8991. PMLR,
  13--18 Jul 2020.
\newblock URL \url{http://proceedings.mlr.press/v119/singla20a.html}.

\bibitem[Singla \& Feizi(2021)Singla and Feizi]{boundsingular}
Singla, S. and Feizi, S.
\newblock Fantastic four: Differentiable and efficient bounds on singular
  values of convolution layers.
\newblock In \emph{International Conference on Learning Representations}, 2021.
\newblock URL \url{https://openreview.net/forum?id=JCRblSgs34Z}.

\bibitem[Szegedy et~al.(2014)Szegedy, Zaremba, Sutskever, Bruna, Erhan,
  Goodfellow, and Fergus]{42503}
Szegedy, C., Zaremba, W., Sutskever, I., Bruna, J., Erhan, D., Goodfellow, I.,
  and Fergus, R.
\newblock Intriguing properties of neural networks.
\newblock In \emph{International Conference on Learning Representations}, 2014.
\newblock URL \url{http://arxiv.org/abs/1312.6199}.

\bibitem[Tolstikhin et~al.(2018)Tolstikhin, Bousquet, Gelly, and
  Schoelkopf]{tolstikhin2018wasserstein}
Tolstikhin, I., Bousquet, O., Gelly, S., and Schoelkopf, B.
\newblock Wasserstein auto-encoders.
\newblock In \emph{International Conference on Learning Representations}, 2018.
\newblock URL \url{https://openreview.net/forum?id=HkL7n1-0b}.

\bibitem[Tsipras et~al.(2018)Tsipras, Santurkar, Engstrom, Turner, and
  Madry]{tsipras2019robustness}
Tsipras, D., Santurkar, S., Engstrom, L., Turner, A., and Madry, A.
\newblock Robustness may be at odds with accuracy.
\newblock In \emph{ICLR}, 2018.

\bibitem[Tsuzuku et~al.(2018)Tsuzuku, Sato, and
  Sugiyama]{Tsuzuku2018LipschitzMarginTS}
Tsuzuku, Y., Sato, I., and Sugiyama, M.
\newblock Lipschitz-margin training: Scalable certification of perturbation
  invariance for deep neural networks.
\newblock In \emph{NeurIPS}, 2018.

\bibitem[Villani(2008)]{villani2012optimal}
Villani, C.
\newblock Optimal transport, old and new, 2008.

\bibitem[Weng et~al.(2018)Weng, Zhang, Chen, Song, Hsieh, Boning, and
  Daniel]{weng2018CertifiedRobustness}
Weng, T.-W., Zhang, H., Chen, H., Song, Z., Hsieh, C.-J., Boning, D., and
  Daniel, I. S. D.~A.
\newblock Towards fast computation of certified robustness for relu networks.
\newblock In \emph{International Conference on Machine Learning (ICML)}, july
  2018.

\bibitem[Wong \& Kolter(2018)Wong and Kolter]{Wong2017ProvableDA}
Wong, E. and Kolter, Z.
\newblock Provable defenses against adversarial examples via the convex outer
  adversarial polytope.
\newblock In Dy, J. and Krause, A. (eds.), \emph{Proceedings of the 35th
  International Conference on Machine Learning}, volume~80 of \emph{Proceedings
  of Machine Learning Research}, pp.\  5286--5295, Stockholmsmässan, Stockholm
  Sweden, 10--15 Jul 2018. PMLR.
\newblock URL \url{http://proceedings.mlr.press/v80/wong18a.html}.

\bibitem[Wong et~al.(2018)Wong, Schmidt, Metzen, and Kolter]{Wong2018ScalingPA}
Wong, E., Schmidt, F.~R., Metzen, J.~H., and Kolter, J.~Z.
\newblock Scaling provable adversarial defenses.
\newblock In \emph{NeurIPS}, 2018.

\bibitem[Wong et~al.(2020)Wong, Rice, and Kolter]{Wong2020Fast}
Wong, E., Rice, L., and Kolter, J.~Z.
\newblock Fast is better than free: Revisiting adversarial training.
\newblock In \emph{International Conference on Learning Representations}, 2020.
\newblock URL \url{https://openreview.net/forum?id=BJx040EFvH}.

\bibitem[Xiao et~al.(2018)Xiao, Bahri, Sohl-Dickstein, Schoenholz, and
  Pennington]{xiao2018dynamical}
Xiao, L., Bahri, Y., Sohl-Dickstein, J., Schoenholz, S., and Pennington, J.
\newblock Dynamical isometry and a mean field theory of {CNN}s: How to train
  10,000-layer vanilla convolutional neural networks.
\newblock In Dy, J. and Krause, A. (eds.), \emph{Proceedings of the 35th
  International Conference on Machine Learning}, volume~80 of \emph{Proceedings
  of Machine Learning Research}, pp.\  5393--5402, Stockholmsmässan, Stockholm
  Sweden, 10--15 Jul 2018. PMLR.
\newblock URL \url{http://proceedings.mlr.press/v80/xiao18a.html}.

\bibitem[Zhang et~al.(2018)Zhang, Weng, Chen, Hsieh, and
  Daniel]{zhang2018crown}
Zhang, H., Weng, T.-W., Chen, P.-Y., Hsieh, C.-J., and Daniel, L.
\newblock Efficient neural network robustness certification with general
  activation functions.
\newblock In \emph{Advances in Neural Information Processing Systems (NIPS),
  arXiv preprint arXiv:1811.00866}, dec 2018.

\bibitem[Zhang et~al.(2019)Zhang, Zhang, and Hsieh]{zhang2018recurjac}
Zhang, H., Zhang, P., and Hsieh, C.-J.
\newblock Recurjac: An efficient recursive algorithm for bounding jacobian
  matrix of neural networks and its applications.
\newblock In \emph{AAAI Conference on Artificial Intelligence (AAAI), arXiv
  preprint arXiv:1810.11783}, dec 2019.

\end{thebibliography}
\bibliographystyle{icml2021}

\clearpage

\appendix

{\Large \bf Appendix}

\section{Notation}\label{sec:appendix_notation}
For $n \in \mathbb{N}$, we use $[n]$ to denote the set $\{0, \dots , n\}$. For a vector $\bv$, we use $\bv_{j}$ to denote the element in the $j^{th}$ position of the vector. We use $\bA_{j,:}$ and $\bA_{:,k}$ to denote the $j^{th}$ row and $k^{th}$ column of the matrix $\bA$ respectively. We assume both $\bA_{j,:}$, $\bA_{:,k}$ to be column vectors (thus $\bA_{j,:}$ is the transpose of $j^{th}$ row of $\bA$). $\bA_{j,k}$ denotes the element in $j^{th}$ row and $k^{th}$ column of $\bA$. $\bA_{j,:k}$ and $\bA_{:j,k}$ denote the vectors containing the first $k$ elements of the $j^{th}$ row and first $j$ elements of $k^{th}$ column, respectively. $\bA_{:j,:k}$ denotes the matrix containing the first $j$ rows and $k$ columns of $\bA$. The same rules can be directly extended to higher order tensors. We use bold zero i.e $\mathbf{0}$ to denote the matrix (or tensor) consisting of zero at all elements, $\bI_{n}$ to denote the identity matrix of size $n \times n$. We use $\mathbb{C}$ to denote the field of complex numbers and $\mathbb{R}$ for real numbers. For a scalar $a \in \mathbb{C}$, $\overline{a}$ denotes its complex conjugate. For a vector $\bv$ or matrix (or tensor) $\bA$, $\overline{\bv}$ or $\overline{\bA}$ denotes the element-wise complex conjugate. For $\bA \in \mathbb{C}^{m \times n}$, $\bA^{H}$ denotes the hermitian transpose i.e  $\bA^{H} = \overline{\bA^{T}}$. For a scalar $a \in \mathbb{C}$, $\operatorname{Re}(a)$, $\operatorname{Im}(a)$ and $|a|$ denote the real part, imaginary part and modulus of $a$ respectively. We use $[a,b)$ where $a,b \in \mathbb{C}$ to denote the set consisting of complex scalars on the line connecting $a$ and $b$ (including $a$, but excluding $b$). $\bA \otimes \bB$ denotes the kronecker product between matrices $\bA$ and $\bB$. We use $\iota$ to denote \textit{iota} (i.e $\iota^{2}=-1$).

For a matrix $\bA \in \mathbb{C}^{q \times r}$ and a tensor $\bB \in \mathbb{C}^{p \times q \times r}$, $\overrightarrow{\bA}$ denotes the vector constructed by stacking the rows of $\bA$ and $ \overrightarrow{\bB}$ by stacking the vectors $\overrightarrow{\bB_{j,:,:}},\ j \in [p-1]$ so that: 
\begin{align*}
&\left(\overrightarrow{\bA}\right)^{T} = \begin{bmatrix}
\bA_{0,:}^{T}\ ,\ \bA_{1,:}^{T}\ ,\ \hdots\ ,\ \bA_{q-1,:}^{T}\end{bmatrix} \\
&\left(\overrightarrow{\bB}\right)^{T} = \begin{bmatrix}
\left(\overrightarrow{\bB_{0,:,:}}\right)^{T}\ ,\ \left(\overrightarrow{\bB_{1,:,:}}\right)^{T}\ ,\ \hdots\ ,\ \left(\overrightarrow{\bB_{p-1,:,:}}\right)^{T} \end{bmatrix}
\end{align*}

For a $2$D convolution filter, $\bL \in \mathbb{C}^{p \times q \times r \times s}$, we define the tensor $\convtransposeoperator(\bL) \in \mathbb{C}^{q \times p \times r \times s}$ as follows:
\begin{align}
 [\mathrm{conv\_transpose}(\bL)]_{i,j,k,l} = \overline{[\bL]}_{j,i,r-1-k,s-1-l}  \label{eq:conv2d_h_appendix}
\end{align}
Note that this is very different from the usual matrix transpose. Given an input $\bX \in \mathbb{C}^{q \times n \times n}$, we use $\bL \star \bX \in \mathbb{C}^{p \times n \times n}$ to denote the convolution of filter $\bL$ with $\bX$. The notation $\bL \star^{i} \bX \triangleq \bL \star^{i-1}\left(\bL \star \bX\right)$. Unless specified otherwise, we assume zero padding and stride 1 in each direction.

\section{Proofs}\label{sec:proofs}
\subsection{Proof of Theorem \ref{thm:2d_kernel}}\label{subsec:proofs_appendix_2d_kernel}
\begin{thm}
	Consider a convolution filter $\bL \in \mathbb{C}^{m \times m \times (2p+1) \times (2q+1)}$ applied to an input $\bX \in \mathbb{C}^{m \times n \times n}$ that results in output $\bY = \bL \star \bX  \in \mathbb{C}^{m \times n \times n}$. Let $\bJ$ be the jacobian of $\overrightarrow{\bY}$ with respect to $\overrightarrow{\bX}$ , then the jacobian for convolution with the filter $\convtransposeoperator(\bL)$ is equal to $\bJ^{H}$.
\end{thm}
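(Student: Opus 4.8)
The plan is to reduce to the single-channel case and then match the coefficient of each shift operator. First I would record the block structure: for $\bL \in \mathbb{C}^{m \times m \times (2p+1) \times (2q+1)}$, multi-channel convolution sets $\bY_{a} = \sum_{b} \bL_{a,b,:,:} \star \bX_{b}$, so the Jacobian $\bJ$ is an $m \times m$ block matrix whose $(a,b)$ block $\bJ_{a,b}$ is precisely the single-channel Jacobian of the filter $\bL_{a,b,:,:}$. Since Hermitian transpose of a block matrix swaps block indices and conjugate-transposes each block, $(\bJ^{H})_{a,b} = (\bJ_{b,a})^{H}$. On the other side, the definition $[\convtransposeoperator(\bL)]_{a,b,k,l} = \overline{[\bL]}_{b,a,\,2p-k,\,2q-l}$ already performs the block swap $a \leftrightarrow b$, so it remains to verify the single-channel identity: for a filter $\bK \in \mathbb{C}^{1\times 1\times(2p+1)\times(2q+1)}$ with Jacobian $\bJ_{\bK}$, the Jacobian of convolution with $\convtransposeoperator(\bK)$ equals $\bJ_{\bK}^{H}$.

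For that single-channel step I would use the given expansion $\bJ_{\bK} = \sum_{i=-p}^{p}\sum_{j=-q}^{q} \bK_{0,0,p+i,q+j}\,\bigl(\bP^{(i)} \otimes \bP^{(j)}\bigr)$. The key algebraic fact is that the shift matrices are real with $(\bP^{(k)})^{T} = \bP^{(-k)}$, since $\bP^{(k)}_{a,b}=1$ iff $a-b=k$. Hence $\bigl(\bP^{(i)} \otimes \bP^{(j)}\bigr)^{H} = \bP^{(-i)} \otimes \bP^{(-j)}$, giving
\begin{align*}
\bJ_{\bK}^{H} = \sum_{i=-p}^{p}\sum_{j=-q}^{q} \overline{\bK_{0,0,p+i,q+j}}\,\bigl(\bP^{(-i)} \otimes \bP^{(-j)}\bigr).
\end{align*}
Re-indexing $i \mapsto -i$, $j \mapsto -j$ rewrites this as $\sum_{i,j} \overline{\bK_{0,0,p-i,q-j}}\,(\bP^{(i)} \otimes \bP^{(j)})$.

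Finally I would expand the Jacobian of convolution with $\convtransposeoperator(\bK)$ using the same formula and read off the coefficient of $\bP^{(i)} \otimes \bP^{(j)}$, namely $[\convtransposeoperator(\bK)]_{0,0,p+i,q+j} = \overline{\bK_{0,0,\,2p-(p+i),\,2q-(q+j)}} = \overline{\bK_{0,0,p-i,q-j}}$, which matches the reindexed expression for $\bJ_{\bK}^{H}$ term by term. Assembling the blocks then yields $\nabla_{\overrightarrow{\bX}}\,\overrightarrow{(\convtransposeoperator(\bL)\star\bX)} = \bJ^{H}$. The main obstacle is purely the index bookkeeping: one must keep the spatial flip (which induces the shift re-indexing and conjugation) separate from the channel swap (which induces the block transpose), and confirm that the zero-padding convention makes the shift-matrix expansion exact so that $(\bP^{(k)})^{T}=\bP^{(-k)}$ holds without boundary corrections.
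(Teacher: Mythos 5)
Your proposal is correct and follows essentially the same route as the paper's proof: the identical shift-matrix expansion $\bJ = \sum_{i,j} \bL_{p+i,q+j}\,(\bP^{(i)} \otimes \bP^{(j)})$ with $(\bP^{(k)})^{T} = \bP^{(-k)}$ for the single-channel case, and the same block-swap argument for $m>1$ (you merely present the block reduction first, whereas the paper does the single-channel case first). No gaps.
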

\begin{proof}
We first prove the above result assuming $m=1$.\\
\textbf{Assuming} $\mathbf{m=1}$:\\
We know that $\bJ$ is a doubly toeplitz matrix of size $n^{2} \times n^{2}$:
\[\bJ = \begin{bmatrix}
	\bJ^{(0)} & \bJ^{(-1)} & \cdots & \bJ^{(-p)} & 0 \\
	\bJ^{(1)} & \bJ^{(0)} & \bJ^{(-1)} & \ddots & \ddots \\
	\vdots & \bJ^{(1)} & \bJ^{(0)} & \ddots & \ddots\\
	\bJ^{(p)} & \ddots & \ddots & \ddots & \bJ^{(-1)}\\
	0 & \ddots & \ddots & \bJ^{(1)} & \bJ^{(0)}
	\end{bmatrix}\]
In the above equation, each $\bJ^{(i)}$ is a toeplitz matrix of size $n \times n$. Define $\bP^{(k)}$ as a $n \times n$ matrix with $\bP^{(k)}_{i,j}=1$ if $i-j=k$ and $0$ otherwise. Thus $\bJ$ can be written as:
$$\bJ = \sum_{i=-p}^{p} \bP^{(i)} \otimes \bJ^{(i)} $$
Since each matrix $\bJ^{(i)}$ is a toeplitz matrix, it can be written as follows. Because the first two dimensions of filter $\bL$ are of size $1$, we index $\bL$ using only the last two indices:
$$\bJ^{(i)} = \sum_{j=-q}^{q} \bL_{p+i, q+j} \bP^{(j)}  $$
Thus, $\bJ$ can be written as:
$$\bJ = \sum_{i=-p}^{p} \sum_{j=-q}^{q} \bL_{p+i, q+j} \left(\bP^{(i)} \otimes \bP^{(j)}\right) $$
Thus, $\bJ^{H}$ can be written as:
\begin{align*}
&\bJ^{H} = \sum_{i=-p}^{p} \sum_{j=-q}^{q} \overline{\bL_{p+i, q+j}} \left(\bP^{(i)} \otimes \bP^{(j)}\right)^{T} \\
&\bJ^{H} = \sum_{i=-p}^{p} \sum_{j=-q}^{q} \overline{\bL_{p+i, q+j}} \left(\bP^{(i)}\right)^{T} \otimes \left(\bP^{(j)}\right)^{T} \\
&\bJ^{H} = \sum_{i=-p}^{p} \sum_{j=-q}^{q} \overline{\bL_{p+i, q+j}} \left(\bP^{(-i)} \otimes \bP^{(-j)}\right) \\
&\bJ^{H} = \sum_{i=-p}^{p} \sum_{j=-q}^{q} \overline{\bL_{p-i, q-j}} \left(\bP^{(i)} \otimes \bP^{(j)}\right) 
\end{align*}
Thus $\bJ^{H}$ corresponds to the jacobian of the convolution filter flipped along the third, fourth axis and each individual element conjugated.\\

Next, we prove the result when $m>1$.\\
\textbf{Assuming} $\mathbf{m > 1}$:\\
We know that $\bJ$ is a matrix of size $mn^{2} \times mn^{2}$. Let $\bJ^{(i,j)}$ denote the block of size $n^{2} \times n^{2}$ as follows:
$$ \bJ^{(i,j)} = \bJ_{in^{2}:(i+1)n^{2}, jn^{2}:(j+1)n^{2}} $$
Note that $\bJ^{(i,j)}$ is the jacobian of convolution with $1 \times 1$ filter $\bL_{i:i+1,j:j+1,:,:}$. Now consider the $(i,j)^{th}$ block of $\bJ^{H}$. Using definition of conjugate transpose (i.e ${H}$ operator):
\begin{align}
 \big(\bJ^H\big)^{(i,j)} = \big(\bJ^{(j,i)}\big)^{H} \label{eq:j_block_2d}
\end{align}
Consider the $1 \times 1$ filter at the $(i,j)^{th}$ index in $\convtransposeoperator(\bL)$. By the definition of $\convtransposeoperator$ operator, we have:
\begin{align}
    &\left[\convtransposeoperator(\bL)\right]_{i:i+1,j:j+1,:,:} \nonumber\\
    &= \convtransposeoperator(\bL_{j:j+1,i:i+1,:,:}) \label{eq:conv_block_2d}
\end{align}
Using equations \eqref{eq:j_block_2d} and \eqref{eq:conv_block_2d} and the proof for the case $\bm=1$, we have the desired proof.

\end{proof}

\subsection{Proof of Theorem \ref{thm:main_theorem}}\label{subsec:proofs_appendix_main_theorem}
\begin{thm}
	Consider a convolution filter $\bL \in \mathbb{C}^{m \times m \times (2p+1) \times (2q+1)}$. Given an input $\bX \in \mathbb{C}^{m \times n \times n}$, output $\bY = \bL \star \bX \in \mathbb{C}^{m \times n \times n}$. The jacobian of $\overrightarrow{\bY}$ with respect to $\overrightarrow{\bX}$ (call it $\bJ$) will be a matrix of size $n^{2}m \times n^{2}m$. $\bJ$ is a skew hermitian matrix if and only if:
	$$\bL = \bM - \convtransposeoperator(\bM) $$
	for some filter $\bM \in \mathbb{C}^{m \times m \times (2p+1) \times (2q+1)}$:
\end{thm}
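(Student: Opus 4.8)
The plan is to exploit that the map sending a filter to the Jacobian of its associated convolution is complex-linear and injective, so that the desired filter identity $\bL = \bM - \convtransposeoperator(\bM)$ can be checked at the level of Jacobians. Injectivity follows from the explicit expansion used in the proof of Theorem \ref{thm:2d_kernel}: writing $\bJ = \sum_{i=-p}^{p}\sum_{j=-q}^{q} \bL_{p+i,q+j}\,(\bP^{(i)} \otimes \bP^{(j)})$ (and its block analogue for $m>1$), the matrices $\bP^{(i)} \otimes \bP^{(j)}$ have disjoint support whenever $n$ exceeds the filter radii, hence are linearly independent, so distinct filters yield distinct Jacobians. I will also record two elementary facts about $\convtransposeoperator$ that are immediate from equation \eqref{eq:conv2d_h}: it is additive and commutes with multiplication by a real scalar (it is conjugate-linear in general, but the only scalar used below is the real $1/2$), and it is an involution, $\convtransposeoperator(\convtransposeoperator(\bL)) = \bL$.

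For the reverse direction, suppose $\bL = \bM - \convtransposeoperator(\bM)$ and let $\bJ_M$ denote the Jacobian of $\bM$. By Theorem \ref{thm:2d_kernel} the Jacobian of $\convtransposeoperator(\bM)$ is $\bJ_M^{H}$, so by linearity of the filter-to-Jacobian map the Jacobian of $\bL$ equals $\bJ_M - \bJ_M^{H}$, which is manifestly skew-Hermitian since $\left(\bJ_M - \bJ_M^{H}\right)^{H} = -\left(\bJ_M - \bJ_M^{H}\right)$.

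For the forward direction, suppose the Jacobian $\bJ$ of $\bL$ is skew-Hermitian, $\bJ = -\bJ^{H}$. By Theorem \ref{thm:2d_kernel} the Jacobian of $\convtransposeoperator(\bL)$ is $\bJ^{H} = -\bJ$, which is also the Jacobian of $-\bL$; by injectivity of the filter-to-Jacobian map this forces the filter-level identity $\convtransposeoperator(\bL) = -\bL$. I then set $\bM = \bL/2$ and verify directly that $\bM - \convtransposeoperator(\bM) = \tfrac{1}{2}\bL - \tfrac{1}{2}\convtransposeoperator(\bL) = \tfrac{1}{2}\bL + \tfrac{1}{2}\bL = \bL$, using that $\convtransposeoperator$ commutes with the real scalar $1/2$ together with $\convtransposeoperator(\bL) = -\bL$. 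This is the filter transcription of the matrix fact that a skew-Hermitian $\bJ$ satisfies $\bJ = \tfrac{1}{2}\bJ - \tfrac{1}{2}\bJ^{H}$.

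The main obstacle is the injectivity step: care is needed to confirm that the matrices $\bP^{(i)} \otimes \bP^{(j)}$ (and their block versions when $m > 1$) are genuinely linearly independent for the input sizes $n$ under consideration, since this is precisely what licenses passing between the filter identity and the Jacobian identity in both implications. Once injectivity is established, both directions reduce to the elementary antisymmetrization identity transported through Theorem \ref{thm:2d_kernel}.
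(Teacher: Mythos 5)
Your proof is correct, but it takes a genuinely different route from the paper's. The paper proves the forward direction by an explicit construction: it partitions $\bL$ into $1\times 1$ channel blocks $\bL^{(i,j)}$, derives $\bL^{(i,j)} = -\convtransposeoperator\left(\bL^{(j,i)}\right)$ from skew-Hermitianness, and then builds $\bM$ by keeping the blocks with $i<j$, zeroing those with $i>j$, and splitting each diagonal block spatially (first ``half'' of the positions kept, the center entry halved, the rest zeroed), finally verifying $\bL = \bM - \convtransposeoperator(\bM)$ by a case analysis over block and spatial indices. You instead observe that the filter-to-Jacobian map is linear and injective, deduce the single filter-level identity $\convtransposeoperator(\bL) = -\bL$ from $\bJ^{H} = -\bJ$ and Theorem \ref{thm:2d_kernel}, and take $\bM = \bL/2$; the reverse direction is the same in both treatments. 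Your argument is shorter and makes explicit the injectivity step that the paper also uses tacitly (when it passes from $\bJ^{(i,j)} = -\left(\bJ^{(j,i)}\right)^{H}$ to the corresponding filter identity); your caveat that injectivity requires $n$ to exceed the filter radii is the correct, and standard, standing assumption. What the paper's heavier construction buys is an $\bM$ supported on only about half the filter entries, which exhibits a non-redundant parametrization of Skew-Hermitian filters, whereas $\bM = \bL/2$ only establishes existence; since the theorem as stated is an existence claim, your proof fully suffices. One tiny remark: you record that $\convtransposeoperator$ is an involution, but your argument never actually uses this fact, only additivity, compatibility with the real scalar $1/2$, and $\convtransposeoperator(\bL) = -\bL$.
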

\begin{proof}
We first prove that if $\bJ$ is a skew-hermitian matrix, then:
$$\bL = \bM - \convtransposeoperator(\bM) $$
Let $\bJ^{(i,j)}$ denote the block of size $n^{2} \times n^{2}$ as follows:
$$ \bJ^{(i,j)} = \bJ_{in^{2}:(i+1)n^{2}, jn^{2}:(j+1)n^{2}} $$
so that $\bJ$ can be written in terms of the blocks $\bJ^{(i,j)}$:
\[\bJ = \begin{bmatrix}
	\bJ^{(0,0)}&\bJ^{(0,1)}&\cdots &\bJ^{(0,m-1)} \\
	\bJ^{(1,0)}&\bJ^{(1,1)}&\cdots &\bJ^{(1,m-1)} \\
	\vdots & \vdots & \ddots & \vdots\\
	\bJ^{(m-1,0)}&\bJ^{(m-1,1)}&\cdots &\bJ^{(m-1,m-1)}
	\end{bmatrix}\]

Since $\bJ$ is skew-hermitian, we have:
$$ \bJ^{(i,j)} = -\left(\bJ^{(j,i)}\right)^{H}, \quad \forall\ i,\ j \in [m-1] $$
It is readily observed that $\bJ^{(i,j)}$ corresponds to the jacobian of convolution with $1 \times 1$ filter $\bL_{i:i+1,j:j+1,:,:}$. For some given filter $\bA$, we use $\bA^{(i,j)}$ to denote the $1 \times 1$ filter $\bA_{i:i+1,j:j+1,:,:}$ for simplicity. Thus, the above equation can be rewritten as:
\begin{align}
\bL^{(i,j)} = -\convtransposeoperator\left(\bL^{(j,i)}\right), \quad \forall\ i,\ j \in [m-1] \label{eq:L_i_j_h_equal}
\end{align}
Now construct a filter $\bM$ such that for $i \neq j$:
\begin{align}
&\bM^{(i,j)} = \begin{cases}
\bL^{(i,j)}, \qquad i < j\\
\mathbf{0}, \qquad i > j \\
\end{cases} \label{eq:M_i_j_neq}
\end{align}
For $i=j$, $\bM$ is given as follows:
\begin{align}
&\bM^{(i,i)}_{r,s} = \begin{cases}
\bL^{(i,i)}_{r,s}, \qquad r \leq p-1\\
\bL^{(i,i)}_{r,s}, \qquad r = p,\ s \leq q-1 \\
0.5 \times \bL^{(i,i)}_{r,s}, \qquad r = p,\ s = q \\
0, \qquad otherwise
\end{cases} \label{eq:M_i_i}
\end{align}
Next, our goal is to show that:
\begin{align*}
    \bL = \bM - \convtransposeoperator(\bM)
\end{align*}
Now by the definition of $\convtransposeoperator$, we have:
\begin{align}
& [\bM - \convtransposeoperator(\bM)]^{(i,j)} \nonumber \\
& = \bM^{(i,j)} - \left[\convtransposeoperator(\bM)\right]^{(i,j)} \nonumber \\
& = \bM^{(i,j)} - \convtransposeoperator\left(\bM^{(j,i)}\right) \label{eq:M_diff_simple}
\end{align}
\textbf{Case 1:} For $i < j$, using equations \eqref{eq:L_i_j_h_equal} and \eqref{eq:M_i_j_neq}:
$$\bM^{(i,j)} - \convtransposeoperator\left(\bM^{(j,i)}\right) = \bM^{(i,j)} = \bL^{(i,j)}$$
\textbf{Case 2:} For $i > j$, using equations \eqref{eq:L_i_j_h_equal} and \eqref{eq:M_i_j_neq}:
\begin{align*}
&\bM^{(i,j)} - \convtransposeoperator\left(\bM^{(j,i)}\right) \\
&= - \convtransposeoperator\left(\bM^{(j,i)}\right) \\
&= - \convtransposeoperator\left(\bL^{(j,i)}\right) = \bL^{(i,j)}
\end{align*}
\textbf{Case 3:} For $i = j$, we further simplify equation \eqref{eq:M_diff_simple}:
\begin{align}
&\bM^{(i,i)}_{r,s} - \bigg[\convtransposeoperator\left(\bM^{(i,i)}\right)\bigg]_{r,s} \nonumber \\
&= \bM^{(i,i)}_{r,s} - \overline{\bM^{(i,i)}_{2p-r,2q-s}} \label{eq:M_diff_simple_element}
\end{align}
\textbf{Subcase 3(a):} For ($r \leq p-1$) or ($r = p,\ s \leq q-1$), we have:
$$ \bM^{(i,i)}_{2p-r,2q-s} = 0 $$
Thus for ($r \leq p-1$) or ($r = p,\ s \leq q-1$): equation \eqref{eq:M_diff_simple_element} simplifies to $\bM^{(i,i)}_{r,s}$. The result follows trivially from the very definition of $\bM^{(i,i)}_{r,s}$, i.e equation \eqref{eq:M_i_i}.\\ \\
\textbf{Subcase 3(b):} For ($r \geq p+1$) or ($r = p ,\ s \geq q+1$), we have: $$\bM^{(i,i)}_{r,s} = 0$$
Thus, equation \eqref{eq:M_diff_simple_element} simplifies to:
\begin{align*}
& \bM^{(i,i)}_{r,s} - \overline{\bM^{(i,i)}_{2p-r,2q-s}} = - \overline{\bM^{(i,i)}_{2p-r,2q-s}} 
\end{align*}
Since ($r \geq p+1$) or ($r = p,\ s \geq q+1$), we have: ($2p-r \leq p-1$) or ($2p-r=p,\ 2q - s \leq q-1$) respectively. Thus using equation \eqref{eq:M_i_i}, we have:
\begin{align*}
& - \overline{\bM^{(i,i)}_{2p-r,2q-s}} = - \overline{\bL^{(i,i)}_{2p-r,2q-s}} 
\end{align*}
Since $\bL^{(i,i)}$ is a skew-hermitian filter, we have from Theorem \ref{thm:2d_kernel}:
\begin{align*}
& \bL^{(i,i)}_{r, s} = - \overline{\bL^{(i,i)}_{2p-r,2q-s}}
\end{align*}
Thus in this subcase, equation \eqref{eq:M_diff_simple_element} simplifies to $\bL^{(i,i)}_{r, s}$ again.\\
\textbf{Subcase 3(c):} For $r = p ,\ s = q$, since $\bL^{(i,i)}$ is a skew-hermitian filter, we have:
\begin{align*}
& \bL^{(i,i)}_{p, q} = - \overline{\bL^{(i,i)}_{p,q}} \\
& \bL^{(i,i)}_{p, q} + \overline{\bL^{(i,i)}_{p,q}} = 0
\end{align*}
Thus, $\bL^{(i,i)}_{p, q}$ is a purely imaginary number. In this subcase
\begin{align*}
& \bM^{(i,i)}_{r,s} - \overline{\bM^{(i,i)}_{2p-r,2q-s}} \\
& = \bM^{(i,i)}_{p,q} - \overline{\bM^{(i,i)}_{p,q}} = 2\bM^{(i,i)}_{p,q}
\end{align*}
Using equation \eqref{eq:M_i_i} , we have:
\begin{align*}
2\bM^{(i,i)}_{p,q} = \bL^{(i,i)}_{p,q}
\end{align*}
Thus, we get:
\begin{align*}
&\bM^{(i,i)}_{r,s} - \bigg[\convtransposeoperator\left(\bM^{(i,i)}\right)\bigg]_{r,s} = \bL^{(i,i)}_{r, s} 
\end{align*}

Thus we have established: $\bL = \bM - \convtransposeoperator(\bM)$. Note that the opposite direction of the if and only if statement follows trivially from the above proof.
\end{proof}

\subsection{Proof of Theorem \ref{thm:approx_error}}\label{subsec:proofs_appendix_approx_error}
\begin{thm}
    \begin{enumerate}[label=(\alph*)]
    \item For a scalar $\lambda \in \mathbb{C}$ with $\operatorname{Re}(\lambda)=0$, the error between $\exp(\lambda)$ and approximation $p_{k}(\lambda)$ given below can be bounded as follows:
    \begin{align}
	 &\exp(\lambda) = \sum_{i=0}^{\infty} \frac{\lambda^{k}}{i!},\qquad p_{k}(\lambda) = \sum_{i=0}^{k-1} \frac{\lambda^{i}}{i!} \label{eq:pk_lambda} \\
     &\big| \exp(\lambda) - p_{k}(\lambda) \big| \leq \frac{|\lambda|^{k}}{k!},\qquad \forall\ \lambda: \operatorname{Re}(\lambda)=0 \nonumber
    \end{align}
    \item For a skew-hermitian matrix $\bA$, the error between $\exp(\bA)$ and the series approximation $\bS_{k}(\bA)$ can be bounded as follows:
    \begin{align*}
    &\exp(\bA) = \sum_{i=0}^{\infty} \frac{\bA^{i}}{i!},\qquad \bS_{k}(\bA) = \sum_{i=0}^{k-1} \frac{\bA^{i}}{i!} \nonumber\\
	&\|\exp(\bA) - \bS_{k}(\bA)\|_{2} \leq \frac{\|\bA\|_{2}^{k}}{k!}\nonumber
	\end{align*}
    \end{enumerate}
\end{thm}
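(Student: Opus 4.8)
The plan is to establish part (a) for purely imaginary scalars first and then lift it to the matrix case by the spectral theorem. For part (a), I would write $\lambda = \iota\theta$ with $\theta \in \RR$, so that $g(\theta) := \exp(\iota\theta)$ has $k$-th derivative $g^{(k)}(\theta) = \iota^{k}\exp(\iota\theta)$ of modulus exactly $1$ for every $\theta$. Since the degree-$(k-1)$ Taylor polynomial of $g$ about $0$ is precisely $\sum_{i=0}^{k-1}(\iota\theta)^{i}/i! = p_{k}(\lambda)$, Taylor's theorem with the integral form of the remainder gives
\begin{align*}
\exp(\lambda) - p_{k}(\lambda) = \frac{1}{(k-1)!}\int_{0}^{\theta} (\theta - s)^{k-1}\, \iota^{k}\exp(\iota s)\, ds.
\end{align*}
Bounding the integrand by $|\theta - s|^{k-1}$ (because $|\iota^{k}\exp(\iota s)| = 1$) and computing $\int_{0}^{|\theta|} u^{k-1}\, du = |\theta|^{k}/k$ yields $|\exp(\lambda) - p_{k}(\lambda)| \le |\theta|^{k}/k! = |\lambda|^{k}/k!$. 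The essential point, and the reason the hypothesis $\operatorname{Re}(\lambda)=0$ cannot be dropped, is that $|g^{(k)}| \equiv 1$; a naive triangle-inequality bound on the tail $\sum_{i\ge k}\lambda^{i}/i!$ would be strictly weaker.

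For part (b), I would exploit that a skew-Hermitian $\bA$ is normal, hence unitarily diagonalizable: $\bA = \bU\bLambda\bU^{H}$ with $\bU$ unitary and $\bLambda = \mathrm{diag}(\lambda_{1},\dots,\lambda_{n})$. Two facts about the eigenvalues are needed. First, each $\lambda_{j}$ is purely imaginary: if $\bA\bv = \lambda_{j}\bv$ with $\|\bv\|=1$, then $\lambda_{j} = \bv^{H}\bA\bv$ while $\overline{\lambda_{j}} = \bv^{H}\bA^{H}\bv = -\lambda_{j}$, so $\operatorname{Re}(\lambda_{j}) = 0$. Second, $\|\bA\|_{2} = \max_{j}|\lambda_{j}|$, since the spectral norm of a normal matrix equals its spectral radius. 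Because both $\exp(\bA)$ and $\bS_{k}(\bA)$ are a convergent power series and a polynomial in $\bA$ respectively, they are simultaneously diagonalized by the same $\bU$, so
\begin{align*}
\exp(\bA) - \bS_{k}(\bA) = \bU\,\mathrm{diag}\big(\exp(\lambda_{j}) - p_{k}(\lambda_{j})\big)\,\bU^{H}.
\end{align*}

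Finally I would invoke unitary invariance of the spectral norm to reduce to the scalar estimate: conjugation by the unitary $\bU$ preserves $\|\cdot\|_{2}$, so the left-hand norm equals $\max_{j}|\exp(\lambda_{j}) - p_{k}(\lambda_{j})|$. Applying part (a) to each purely imaginary $\lambda_{j}$ bounds this by $\max_{j}|\lambda_{j}|^{k}/k! = (\max_{j}|\lambda_{j}|)^{k}/k! = \|\bA\|_{2}^{k}/k!$, which is exactly the claim. I expect the only genuinely delicate step to be part (a): obtaining the clean factor $|\lambda|^{k}/k!$ rather than a looser tail bound forces the use of the integral remainder (or an equivalent argument splitting the partial sums into the alternating $\cos$ and $\sin$ series) together with the uniform boundedness of the derivatives of $\exp(\iota\theta)$. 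Once the scalar bound is secured, the matrix case is a routine diagonalization argument resting on normality, the purely imaginary spectrum, and $\|\bA\|_{2} = \max_{j}|\lambda_{j}|$.
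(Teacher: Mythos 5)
Your proposal is correct and follows essentially the same route as the paper: part (b) is the identical diagonalization argument (normality, purely imaginary spectrum, unitary invariance, spectral norm equals spectral radius), and part (a) rests on the same key fact that $|\exp(\iota s)|=1$ on the imaginary axis. The only cosmetic difference is that you invoke Taylor's theorem with the integral remainder in one shot, whereas the paper derives the same bound by induction on the recursion $\exp(\lambda)-p_{k+1}(\lambda)=\lambda\int_{0}^{1}\big(\exp(t\lambda)-p_{k}(t\lambda)\big)\,dt$, which is just the iterated form of that remainder.
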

\begin{proof}
Since $\bA$ is skew-hermitian, it is a normal matrix and eigenvectors for distinct eigenvalues must be orthogonal. Let the eigenvalue decomposition of $\bA$ be given as follows:
\begin{align*}
    &\bA = \bU \Lambda \bU^{H} 
\end{align*}
Note that $\Lambda$ is a diagonal matrix, and each element along the diagonal is purely imaginary (since $\bA$ is skew-hermitian). Exponentiating both sides, we get:
\begin{align*}
    &\exp(\bA) = \bU \exp(\Lambda) \bU^{H}
\end{align*}
Thus the error $\bE_{k}(\bA)$ is given by:
\begin{align}
&\bE_{k}(\bA) = \exp(\bA) - \bS_{k}(\bA) \label{eq:error_def} \\
&\bE_{k}(\bA) = \bU \left(\exp(\Lambda) - \bS_{k}(\Lambda)\right) \bU^{H} \nonumber \\
&\|\bE_{k}(\bA)\|_{2} = \|\bU \left(\exp(\Lambda) - \bS_{k}(\Lambda)\right) \bU^{H} \|_{2} \nonumber \\
&\|\bE_{k}(\bA)\|_{2} = \|\left(\exp(\Lambda) - \bS_{k}(\Lambda)\right) \|_{2} \nonumber
\end{align}
Since $\left(\exp(\Lambda) - \bS_{k}(\Lambda)\right)$ is a diagonal matrix, we have:
\begin{align}
&\|\bE_{k}(\bA)\|_{2} = \max_{i} \big|\left(\exp(\Lambda_{i,i}) - p_{k}(\Lambda_{i,i})\right) \big| \label{eq:Ek_error}
\end{align}
Let $\lambda$ be an arbitrary element along the diagonal of $\Lambda$ i.e $\lambda = \Lambda_{i,i}$ for some $i$. First note that:
\begin{align*}
&\lambda \int^{1}_{0} \{\exp(t\lambda) - p_{k}(t\lambda) \}dt \\
&=  \int^{1}_{0} \{\exp(t\lambda) - p_{k}(t\lambda) \}\lambda dt 
\end{align*}
Substituting $u = \lambda t$, we have:
\begin{align*}
&=  \int^{\lambda}_{0} \{\exp(u) - p_{k}(u) \} du \\
&=  \int^{\lambda}_{0} \exp(u)du - \int^{\lambda}_{0}p_{k}(u)du \\
&=  \exp(\lambda) - 1 - \int^{\lambda}_{0}p_{k}(u)du \\
\end{align*}
Substituting $p_{k}(u)$ using equation \eqref{eq:pk_lambda}:
\begin{align*}
&=  \exp(\lambda) - 1 - \int^{\lambda}_{0} \sum_{i=0}^{k-1} \frac{u^{i}}{i!} du \\
&=  \exp(\lambda) - 1 - \sum_{i=0}^{k-1} \int^{\lambda}_{0} \frac{u^{i}}{i!} du \\
&=  \exp(\lambda) - 1 - \sum_{i=0}^{k-1} \frac{u^{i+1}}{(i+1)!} \bigg|^{\lambda}_{0} \\
&=  \exp(\lambda) - 1 - \sum_{i=0}^{k-1} \frac{\lambda^{i+1}}{(i+1)!} \\
&=  \exp(\lambda) - 1 - \sum_{i=1}^{k} \frac{\lambda^{i}}{i!} \\
&=  \exp(\lambda) - \sum_{i=0}^{k+1} \frac{\lambda^{i}}{i!} \\
&=  \exp(\lambda) - p_{k+1}(\lambda)
\end{align*}
This gives the following result:
\begin{align}
\exp(\lambda) - p_{k+1}(\lambda) = \lambda \int^{1}_{0} \bigg(\exp(t\lambda) - p_{k}(t\lambda) \bigg)dt \label{eq:exp_induction}
\end{align}

We shall now prove the main result using induction and equation \eqref{eq:exp_induction}: \\
\textbf{Base case}: \\
Use $k=0$ and the convention that $p_{0}(\lambda)=0$. We know that $p_{1}(\lambda)=1$.
\begin{align*}
&\exp(\lambda) - p_{1}(\lambda) = \lambda \int^{1}_{0} \bigg(\exp(t\lambda) - p_{0}(t\lambda) \bigg)dt \\
&\big|\exp(\lambda) - 1\big| = \big|\lambda \int^{1}_{0} \exp(t\lambda)dt \big| 
\end{align*}
Since $\lambda$ is purely imaginary and $t$ is purely real, we have $|\exp(t\lambda)|=1$:
\begin{align*}
&\big|\exp(\lambda) - 1\big| \leq \big|\lambda\big| \int^{1}_{0} \big| \exp(t\lambda) \big| dt = \big|\lambda\big| \int^{1}_{0} 1 dt \\
&\big|\exp(\lambda) - 1\big| \leq \big|\lambda\big| 
\end{align*}

\textbf{Induction step}: \\
Assuming this holds for all $k$ i.e: 
\begin{align}
\big|\exp(\lambda) - p_{k}(\lambda)\big| \leq \frac{|\lambda|^{k}}{k!} \label{eq:induction_assumption}
\end{align}
Now let us consider $\big|\exp(\lambda) - p_{k+1}(\lambda)\big|$:
\begin{align*}
&\big|\exp(\lambda) - p_{k+1}(\lambda)\big| \leq \bigg|\lambda  \int^{1}_{0} \bigg(\exp(t\lambda) - p_{k}(t\lambda) \bigg)dt \bigg| \\
&\big|\exp(\lambda) - p_{k+1}(\lambda)\big| \leq \big|\lambda \big|  \int^{1}_{0} \bigg|\bigg(\exp(t\lambda) - p_{k}(t\lambda) \bigg)\bigg| dt  
\end{align*}
Using equation \eqref{eq:induction_assumption}, we have:
\begin{align*}
&\big|\exp(\lambda) - p_{k+1}(\lambda)\big| \leq \big|\lambda \big| \int^{1}_{0}\frac{|t\lambda|^{k}}{k!}dt \\
&\big|\exp(\lambda) - p_{k+1}(\lambda)\big| \leq \big|\lambda \big|^{k+1} \int^{1}_{0}\frac{|t|^{k}}{k!}dt \\
&\big|\exp(\lambda) - p_{k+1}(\lambda)\big| \leq \frac{\big|\lambda \big|^{k+1}}{(k+1)!} 
\end{align*}
This proves $(a)$.\\
Since $\lambda$ is an arbitrary element along the diagonal of eigenvalue matrix $\Lambda$, using equations \eqref{eq:error_def} and \eqref{eq:Ek_error} we have:
\begin{align}
&\|\exp(\bA) - \bS_{k}(\bA)\|_{2} = \max_{i} \big|\exp(\Lambda_{i,i}) - p_{k}(\Lambda_{i,i}) \big| \nonumber \\ 
&\|\exp(\bA) - \bS_{k}(\bA)\|_{2} \leq \max_{i} \frac{\big|\Lambda_{i,i} \big|^{k}}{k!} \nonumber \\
&\|\exp(\bA) - \bS_{k}(\bA)\|_{2} \leq \frac{1}{k!} \max_{i} \big|\Lambda_{i,i} \big|^{k} \label{eq:almost_final_error}
\end{align}
Since $\bA$ is skew-hermitian, it is a normal matrix and singular values are equal to the magnitude of eigenvalues. Thus we have from equation \eqref{eq:almost_final_error}:
\begin{align*}
&\max_{i} \big|\Lambda_{i,i} \big| = \left\|\Lambda \right\|_{2} = \left\|\bA \right\|_{2} \\
&\|\exp(\bA) - \bS_{k}(\bA)\|_{2} \leq \frac{\left\|\bA \right\|^{k}_{2}}{k!} \nonumber
\end{align*}
This proves $(b)$.
\end{proof}

\subsection{Proof of Theorem \ref{thm:approx_cyclic_real}}\label{subsec:proofs_appendix_approx_cyclic_real}
\begin{thm}
    Given a real skew-symmetric matrix $\bA \in \mathbb{R}^{n \times n}$, we can construct a real skew-symmetric matrix $\bB \in \mathbb{R}^{n \times n}$ such that $\bB$ satisfies: (a) $\exp(\bA)=\exp(\bB)$ and (b) $\|\bB\|_{2} \leq \pi$.
\end{thm}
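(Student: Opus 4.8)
The plan is to pass to the real canonical form of skew-symmetric matrices and then reduce each rotation angle modulo $2\pi$, mirroring the scalar observation $\exp(\iota\theta)=\exp(\iota(\theta+2\pi k))$ that motivates the theorem. First I would invoke the orthogonal block-diagonalization of $\bA$: since $\bA=-\bA^{T}$ is a real normal matrix, there exist an orthogonal matrix $\bQ\in\mathbb{R}^{n\times n}$ and real numbers $\theta_{1},\dots,\theta_{r}$ with
$$\bA = \bQ \bD \bQ^{T}, \qquad \bD = \bigoplus_{k=1}^{r} \begin{bmatrix} 0 & \theta_{k} \\ -\theta_{k} & 0 \end{bmatrix} \oplus \mathbf{0},$$
where the trailing zero block absorbs any zero eigenvalues (and an odd dimension). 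The nonzero eigenvalues of $\bA$ are exactly the pairs $\pm\iota\theta_{k}$.

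Next I would exponentiate block by block. Each $2\times 2$ block exponentiates to a planar rotation,
$$\exp\begin{bmatrix} 0 & \theta_{k} \\ -\theta_{k} & 0 \end{bmatrix} = \begin{bmatrix} \cos\theta_{k} & \sin\theta_{k} \\ -\sin\theta_{k} & \cos\theta_{k} \end{bmatrix},$$
so $\exp(\bA)=\bQ\exp(\bD)\bQ^{T}$ depends on each $\theta_{k}$ only through $\cos\theta_{k}$ and $\sin\theta_{k}$, i.e.\ only through $\theta_{k}\bmod 2\pi$. I would then let $\theta_{k}'\in(-\pi,\pi]$ be the unique representative of $\theta_{k}$ modulo $2\pi$, form $\bD'$ by replacing each $\theta_{k}$ with $\theta_{k}'$ (keeping the zero block), and set $\bB=\bQ\bD'\bQ^{T}$.

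It then remains to verify the three required properties. The matrix $\bB$ is real as a product of real matrices, and skew-symmetric because $\bD'^{T}=-\bD'$ and $\bQ$ is orthogonal, so $\bB^{T}=\bQ\bD'^{T}\bQ^{T}=-\bB$; this establishes that $\bB$ is a legitimate real skew-symmetric matrix. Since each rotation block is left unchanged by the reduction, $\exp(\bB)=\bQ\exp(\bD')\bQ^{T}=\bQ\exp(\bD)\bQ^{T}=\exp(\bA)$, giving (a). For (b), $\bB$ is normal, so (as in the proof of Theorem \ref{thm:approx_error}) its singular values equal the magnitudes of its eigenvalues $\pm\iota\theta_{k}'$; hence $\|\bB\|_{2}=\max_{k}|\theta_{k}'|\leq\pi$.

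The main obstacle is not any single deep step but the bookkeeping around the canonical form: one must apply the modular reduction to each conjugate pair symmetrically so that $\bD'$ stays real and skew-symmetric, and handle the boundary case $\theta_{k}\equiv\pi$ (which maps to $\pi$, still satisfying $|\theta_{k}'|=\pi\leq\pi$). The skew-Hermitian version deferred to Appendix Section~\ref{subsec:proofs_appendix_approx_cyclic_complex} is cleaner in this respect: there one diagonalizes $\bA=\bU\Lambda\bU^{H}$ with purely imaginary $\Lambda$ and shifts each diagonal entry individually into $\iota(-\pi,\pi]$, avoiding the conjugate-pair constraint that forces the real block form here.
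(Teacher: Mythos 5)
Your proposal is correct and follows essentially the same route as the paper's proof: orthogonal block-diagonalization of $\bA$ into $2\times 2$ rotation generators, reduction of each angle modulo $2\pi$ into an interval of length $2\pi$ centered at the origin, and reassembly via $\bQ\bD'\bQ^{T}$, with the norm bound following from normality. The only differences are cosmetic (your interval $(-\pi,\pi]$ versus the paper's $[-\pi,\pi)$, and your slightly more explicit verification that $\bB$ remains real and skew-symmetric).
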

\begin{proof}
We know that for eigenvalues of real symmetric matrices are purely imaginary and come in pairs: $\lambda_{1}\iota,\ -\lambda_{1}\iota, \lambda_{2}\iota, -\lambda_{2}\iota$ where each $\lambda_{i}$ is real. When $n$ is an odd integer, $0$ is an eigenvalue. Additionally, we know that a real skew symmetric matrix can be expressed in a block diagonal form as follows:
\begin{align}
    \bA = \bQ\Sigma\bQ^{T} \label{eq:A_block_diag}
\end{align}
Here $\bQ$ is a real orthogonal matrix and $\Sigma$ is a block diagonal matrix defined as follows:
\begin{align}
    \Sigma_{2i:2i+2,2i:2i+2} = \begin{bmatrix}
	0 & \lambda_{i}  \\
	-\lambda_{i} & 0 \\
	\end{bmatrix},\qquad 0 \leq i < \left \lfloor{\frac{n}{2}}\right \rfloor \label{eq:2_2_real_block}
\end{align}
In the above equation, $\lambda_{i} \in \mathbb{R}$ and $\pm\lambda_{i}\iota$ are the eigenvalues of $\bA$. When $n$ is odd, we additionally have:
\begin{align*}
    \Sigma_{n-1,n-1} = 0
\end{align*}
Taking the exponential of both sides of equation \eqref{eq:A_block_diag}:
\begin{align}
    \exp\left(\bA\right) = \bQ\exp\left(\Sigma\right)\bQ^{T} \label{eq:A_block_diag_exp}
\end{align}
We can compute $\exp\left(\Sigma\right)$ by computing the exponential of each $2 \times 2$ block defined in equation \eqref{eq:2_2_real_block}:
\begin{align}
    &\exp\bigg(\begin{bmatrix}
	0 & \lambda_{i}  \\
	-\lambda_{i} & 0 \\
	\end{bmatrix}\bigg) \nonumber \\
	&= \exp\bigg(\lambda_{i}\begin{bmatrix}
	0 & 1  \\
	-1 & 0 \\
	\end{bmatrix}\bigg) = \begin{bmatrix}
	\cos(\lambda_{i}) & -\sin(\lambda_{i})  \\
	\sin(\lambda_{i}) & \cos(\lambda_{i}) \\
	\end{bmatrix} \label{eq:exp_2_2_real_block}
\end{align}
From equation \eqref{eq:exp_2_2_real_block}, we observe each $\lambda_{i}$ can be shifted by integer multiples of $2\pi\iota$ without changing the exponential. \\
For each $\lambda_{i}, i \in [\left \lfloor{n/2}\right \rfloor-1]$, we define a scalar $\mu_{i}$:
\begin{align}
&\mu_{i} = \lambda_{i} + 2\pi k_{i} \iota,\qquad k_{i} \in \mathbb{Z} \label{eq:mu_i_def} \\
&\mu_{i} \in [-\pi \iota, \pi \iota) \label{eq:mu_i_bound}
\end{align}
Construct a new matrix $\bB$ defined as follows:
\begin{align}
 \bB = \bQ \bD \bQ^{T} \label{eq:BQD_def}
\end{align} 
The matrix $\bD$ in equation \eqref{eq:BQD_def} is defined as follows:
\begin{align}
    \bD_{2i:2i+2,2i:2i+2} = \begin{bmatrix}
	0 & \mu_{i}  \\
	-\mu_{i} & 0 \\
	\end{bmatrix},\qquad 0 \leq i < \left \lfloor{\frac{n}{2}}\right \rfloor \label{eq:D_2_2_real_block}
\end{align}
Let us verify that $\bB$ satisfies the following properties.\\
Using equations \eqref{eq:exp_2_2_real_block}, \eqref{eq:mu_i_def} and \eqref{eq:D_2_2_real_block}, we know that:
\begin{align*}
    &\exp(\bD) =  \exp(\Lambda)
\end{align*}
This results in the following set of equations:
\begin{align*}
    &\exp(\bB) = \bQ\exp(\bD)\bQ^{T} \\
    &\exp(\bB) = \bQ\exp(\Lambda)\bQ^{T} = \exp(\bA)
\end{align*}
Using equations \eqref{eq:mu_i_bound} and \eqref{eq:D_2_2_real_block}, we have:
\begin{align*}
    &\|\bB\|_{2} = \|\bQ\bD\bQ^{T}\|_{2} = \|\bD\|_{2} \\
    &\|\bD\|_{2} \leq \pi
\end{align*}
Note that $\bB$ is a product of $3$ real matrices $\bQ$, $\bD$ and $\bQ^{T}$ and hence $\bB$ is real. Moreover, since $\bD$ is skew symmetric, $\bB$ is skew symmetric.
\end{proof}

\subsection{Proof of Theorem \ref{thm:appendix_approx_cyclic_complex}}\label{subsec:proofs_appendix_approx_cyclic_complex}
\begin{theorem}\label{thm:appendix_approx_cyclic_complex}
    Given a skew-hermitian matrix $\bA$, we can construct a skew-hermitian matrix $\bB$ by adding integer multiples of $2\pi \iota$ to eigenvalues of $\bA$ such that $\bB$ satisfies: (a) $\exp(\bA)=\exp(\bB)$ and (b) $\|\bB\|_{2} \leq \pi$.
\end{theorem}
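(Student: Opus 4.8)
The plan is to mirror the proof of Theorem~\ref{thm:approx_cyclic_real}, but to work directly with the spectral decomposition rather than the real block-diagonal (canonical) form. Since $\bA$ is skew-hermitian it is normal, and hence admits a unitary diagonalization $\bA = \bU \Lambda \bU^{H}$, where $\bU$ is unitary and $\Lambda = \operatorname{diag}(\lambda_{0}, \dots, \lambda_{n-1})$. Because $\bA = -\bA^{H}$, every eigenvalue is purely imaginary, so I would write $\lambda_{j} = \iota \theta_{j}$ with $\theta_{j} \in \mathbb{R}$.

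The key observation is that $\exp(\lambda_{j}) = \exp(\iota \theta_{j})$ depends on $\theta_{j}$ only modulo $2\pi$. So for each $j$ I would choose an integer $k_{j}$ such that
\begin{align*}
    \mu_{j} = \lambda_{j} + 2\pi k_{j}\iota \in [-\pi\iota, \pi\iota).
\end{align*}
This reduction is always possible and is exactly the complex analogue of equations~\eqref{eq:mu_i_def}--\eqref{eq:mu_i_bound}. Each $\mu_{j}$ remains purely imaginary and satisfies $|\mu_{j}| \leq \pi$.

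I would then define $\bD = \operatorname{diag}(\mu_{0}, \dots, \mu_{n-1})$ and set $\bB = \bU \bD \bU^{H}$. Three verifications remain. First, $\bB$ is skew-hermitian: since each $\mu_{j}$ is purely imaginary, $\bD^{H} = \overline{\bD} = -\bD$, hence $\bB^{H} = \bU \bD^{H} \bU^{H} = -\bB$. Second, $\exp(\bB) = \exp(\bA)$: because $\mu_{j}$ and $\lambda_{j}$ differ by an integer multiple of $2\pi\iota$, we have $\exp(\mu_{j}) = \exp(\lambda_{j})$ for every $j$, so $\exp(\bD) = \exp(\Lambda)$ and therefore $\exp(\bB) = \bU\exp(\bD)\bU^{H} = \bU\exp(\Lambda)\bU^{H} = \exp(\bA)$. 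Third, the norm bound: since $\bU$ is unitary and $\bD$ is diagonal, $\|\bB\|_{2} = \|\bD\|_{2} = \max_{j}|\mu_{j}| \leq \pi$.

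I do not expect any genuine obstacle here. Unlike the real case of Theorem~\ref{thm:approx_cyclic_real}, which needed the real block-diagonal form to keep $\bB$ real, the complex setting lets us diagonalize directly, so the eigenvalue-shifting argument is immediate. The only point requiring a little care is confirming that adding $2\pi k_{j}\iota$ keeps each eigenvalue purely imaginary---and thus $\bB$ skew-hermitian---which holds automatically because $\iota\theta_{j} + 2\pi k_{j}\iota = \iota(\theta_{j} + 2\pi k_{j})$ stays on the imaginary axis.
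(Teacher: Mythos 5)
Your proposal is correct and follows essentially the same route as the paper's own proof: unitary diagonalization $\bA = \bU\Lambda\bU^{H}$, shifting each purely imaginary eigenvalue by an integer multiple of $2\pi\iota$ into $[-\pi\iota,\pi\iota)$, and setting $\bB = \bU\bD\bU^{H}$. The only (welcome) addition is that you explicitly verify $\bB$ is skew-hermitian, a step the paper leaves implicit.
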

\begin{proof}
Let the eigenvalue decomposition of $\bA$ be given:
$$ \bA = \bU \Lambda \bU^{H} $$
Let $\lambda_{j}$ be some eigenvalue of $\bA$ such that:
$$ \lambda_{j} = \Lambda_{j,j} $$
Construct a new diagonal matrix $\bD$ of eigenvalues such that:
\begin{align}
&\bD_{j,j} = \lambda_{j} + 2\pi k_{j} \iota,\qquad k_{j} \in \mathbb{Z} \label{eq:D_j_j_def}\\
&\bD_{j,j} \in [-\pi \iota, \pi \iota) \label{eq:D_j_j_bound}
\end{align}
Construct a new matrix $\bB$ defined as follows:
$$ \bB = \bU \bD \bU^{H} $$
Let us verify that $\bB$ satisfies the following properties.\\
Using equation \eqref{eq:D_j_j_def}, we have:
\begin{align*}
    &\exp(\bB) = \bU\exp(\bD)\bU^{H} \\
    &\exp(\bB) = \bU\exp(\Lambda)\bU^{H} = \exp(\bA)
\end{align*}
Using equation \eqref{eq:D_j_j_bound}, we have:
\begin{align*}
    &\|\bB\|_{2} = \|\bU\bD\bU^{H}\|_{2} = \|\bD\|_{2} \\
    &\|\bD\|_{2} = \max_{j} |D_{j,j}| \leq \pi
\end{align*}

\end{proof}

\subsection{Proof of Theorem \ref{thm:appendix_3d_kernel}}\label{subsec:proofs_appendix_3d_kernel}
\begin{theorem}\label{thm:appendix_3d_kernel}
	Consider a convolution filter $\bL \in \mathbb{C}^{m \times m \times (2p+1) \times (2q+1) \times (2r+1)}$ applied to an input $\bX \in \mathbb{C}^{m \times n \times n \times n}$ that results in output $\bY = \bL \star \bX \in \mathbb{C}^{m \times n \times n \times n}$. Let $\bJ$ be the jacobian of $\overrightarrow{\bY}$ with respect to $\overrightarrow{\bX}$ , then the jacobian for convolution with the filter $\convthreedtransposeoperator(\bL)$ is equal to $\bJ^{H}$.
\end{theorem}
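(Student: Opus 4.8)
The plan is to imitate the proof of Theorem \ref{thm:2d_kernel} almost verbatim, carrying one extra spatial index through every step. As there, I would first settle the single-channel case $m=1$ and then lift to $m>1$ by a blockwise argument, so that no genuinely new idea is needed beyond an additional Kronecker factor.

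For $m=1$, the crux is to write the Jacobian $\bJ$ of a $\mathrm{3D}$ convolution as a triple Kronecker product of shift matrices. With the vectorization $\overrightarrow{\cdot}$ stacking the outermost spatial axis first, I expect
$$\bJ = \sum_{i=-p}^{p}\sum_{j=-q}^{q}\sum_{k=-r}^{r} \bL_{p+i,\,q+j,\,r+k}\,\big(\bP^{(i)} \otimes \bP^{(j)} \otimes \bP^{(k)}\big),$$
where $\bP^{(k)}\in\mathbb{C}^{n\times n}$ is the shift matrix with $\bP^{(k)}_{a,b}=1$ iff $a-b=k$ and $0$ otherwise. Verifying this expansion — checking that the outermost spatial dimension corresponds to the leftmost Kronecker factor and that the resulting block-Toeplitz-of-block-Toeplitz-of-Toeplitz pattern reproduces the filter entries — is the one piece of genuinely new bookkeeping, and I expect it to be the main (though routine) obstacle.

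Given the expansion, computing $\bJ^{H}$ is immediate from two facts I can use freely: $(\bP^{(k)})^{T}=\bP^{(-k)}$ and $(\bA\otimes\bB\otimes\bC)^{T}=\bA^{T}\otimes\bB^{T}\otimes\bC^{T}$. Applying these and conjugating the scalar coefficients, then re-indexing $(i,j,k)\mapsto(-i,-j,-k)$, yields
$$\bJ^{H} = \sum_{i=-p}^{p}\sum_{j=-q}^{q}\sum_{k=-r}^{r}\overline{\bL_{p-i,\,q-j,\,r-k}}\,\big(\bP^{(i)}\otimes\bP^{(j)}\otimes\bP^{(k)}\big).$$
Comparing with the expansion of $\bJ$, this is exactly the Jacobian of the filter obtained by flipping $\bL$ along all three spatial axes and conjugating each entry, which by definition is $\convthreedtransposeoperator(\bL)$ in the single-channel case. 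This identifies the flip-and-conjugate operation with matrix conjugate-transposition, which is the entire content of the theorem.

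Finally, for $m>1$ I would reuse the block decomposition from the $\mathrm{2D}$ proof without change. Writing $\bJ^{(i,j)}$ for the $n^{3}\times n^{3}$ block of $\bJ$ associated with the single-channel filter $\bL_{i:i+1,\,j:j+1,\,:,:,:}$, the Hermitian transpose acts blockwise as $\big(\bJ^{H}\big)^{(i,j)} = \big(\bJ^{(j,i)}\big)^{H}$, while the definition of $\convthreedtransposeoperator$ swaps the channel indices, giving $[\convthreedtransposeoperator(\bL)]^{(i,j)} = \convthreedtransposeoperator\!\big(\bL^{(j,i)}\big)$. Combining these two block identities with the single-channel result then proves the claim in full generality.
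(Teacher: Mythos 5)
Your proposal is correct and follows essentially the same route as the paper's own proof: the triple Kronecker expansion $\bJ = \sum_{i,j,k} \bL_{p+i,q+j,r+k}\,(\bP^{(i)} \otimes \bP^{(j)} \otimes \bP^{(k)})$ for $m=1$, the transpose/conjugate/re-index computation of $\bJ^{H}$, and the blockwise reduction to the single-channel case for $m>1$ are exactly the steps the paper takes. The one step you flag as needing verification (that the vectorization order matches the Kronecker factor order) is likewise asserted without further detail in the paper, so your write-up is at the same level of rigor.
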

\begin{proof}
We first prove the above result assuming $m=1$.\\
\textbf{Assuming} $\mathbf{m=1}$:\\
Because the first two dimensions of filter $\bL$ are of size $1$, we index $\bL$ using only the last two indices. Define $\bP^{(k)}$ as a $n \times n$ matrix with $\bP^{(k)}_{i,j}=1$ if $i-j=k$ and $0$ otherwise. We know that $\bJ$ is a triply toeplitz matrix of size $n^{3} \times n^{3}$ given as follows:
$$\bJ = \sum_{i=-p}^{p} \sum_{j=-q}^{q} \sum_{k=-r}^{r} \bL_{p+i, q+j, r+k} \left(\bP^{(i)} \otimes \bP^{(j)} \otimes \bP^{(k)} \right) $$
Thus, $\bJ^{H}$ can be written as:
\begin{align*}
&\bJ^{H} \\
&=\sum_{i=-p}^{p} \sum_{j=-q}^{q} \sum_{k=-r}^{r} \overline{\bL_{p+i, q+j, r+k}} \left(\bP^{(i)} \otimes \bP^{(j)} \otimes \bP^{(k)} \right)^{T}\\
&=\sum_{i=-p}^{p} \sum_{j=-q}^{q} \sum_{k=-r}^{r} \overline{\bL_{p+i, q+j, r+k}} \bP^{(-i)} \otimes \bP^{(-j)} \otimes \bP^{(-k)} \\
&=\sum_{i=-p}^{p} \sum_{j=-q}^{q} \sum_{k=-r}^{r} \overline{\bL_{p-i, q-j, r-k}} \bP^{(i)} \otimes \bP^{(j)} \otimes \bP^{(k)} 
\end{align*}
Thus $\bJ^{H}$ corresponds to the jacobian of the convolution filter flipped along the third, fourth, fifth axis and each individual element conjugated. \\

Next, we prove the above result when $m > 1$.\\
\textbf{Assuming} $\mathbf{m > 1}$:\\
We know that $\bJ$ is a matrix of size $mn^{3} \times mn^{3}$. Let $\bJ^{(i,j)}$ denote the block of size $n^{3} \times n^{3}$ as follows:
$$ \bJ^{(i,j)} = \bJ_{in^{3}:(i+1)n^{3}, jn^{3}:(j+1)n^{3}} $$
Note that $\bJ^{(i,j)}$ is the jacobian of convolution with $1 \times 1$ filter $\bL_{i:i+1,j:j+1,:,:}$. Now consider the $(i,j)^{th}$ block of $\bJ^{H}$. Using definition of conjugate transpose (i.e ${H}$ operator):
\begin{align}
 \big(\bJ^H\big)^{(i,j)} = \big(\bJ^{(j,i)}\big)^{H} \label{eq:j_block_3d}
\end{align}
Consider the $1 \times 1$ filter at the $(i,j)^{th}$ index in $\convthreedtransposeoperator(\bL)$. By the definition of $\convthreedtransposeoperator$ operator, we have:
\begin{align}
    &\left[\convthreedtransposeoperator(\bL)\right]_{i:i+1,j:j+1,:,:} \nonumber \\
    &= \convthreedtransposeoperator(\bL_{j:j+1,i:i+1,:,:}) \label{eq:conv_block_3d}
\end{align}
Using equations \eqref{eq:j_block_3d} and \eqref{eq:conv_block_3d} and the proof for the case $\bm=1$, we have the desired proof.

\end{proof}

\subsection{Proof of Theorem \ref{thm:appendix_main_theorem_3d}}\label{subsec:proofs_appendix_main_theorem_3d}
\begin{theorem}\label{thm:appendix_main_theorem_3d}
	Consider a convolution filter $\bL \in \mathbb{C}^{m \times m \times (2p+1) \times (2q+1) \times (2r+1)}$. Given an input $\bX  \in \mathbb{C}^{m \times n \times n \times n}$, output $\bY = \bL \star \bX \in \mathbb{C}^{m \times n \times n \times n}$. The jacobian of $\overrightarrow{\bY}$ with respect to $\overrightarrow{\bX}$ (call it $\bJ$) will be a matrix of size $n^{3}m \times n^{3}m$. $\bJ$ is a skew hermitian matrix if and only if:
	$$\bL = \bM - \convthreedtransposeoperator(\bM) $$
	for some filter $\bM \in \mathbb{C}^{m \times m \times (2p+1) \times (2q+1) \times (2r+1)}$:
\end{theorem}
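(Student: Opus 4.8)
The plan is to mirror the proof of Theorem~\ref{thm:main_theorem} essentially verbatim in structure, replacing every appeal to the 2D flip--transpose identity (Theorem~\ref{thm:2d_kernel}) by its 3D counterpart (Theorem~\ref{thm:appendix_3d_kernel}) and carrying one extra spatial index through the bookkeeping. First I would reduce to the block picture: writing $\bJ^{(i,j)} = \bJ_{in^{3}:(i+1)n^{3},\, jn^{3}:(j+1)n^{3}}$, each $\bJ^{(i,j)}$ is the Jacobian of convolution with the $1\times 1$ filter $\bL^{(i,j)} := \bL_{i:i+1,\,j:j+1,:,:,:}$, and the skew-Hermitian hypothesis $\bJ = -\bJ^{H}$ is equivalent blockwise to $\bJ^{(i,j)} = -\bigl(\bJ^{(j,i)}\bigr)^{H}$. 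By Theorem~\ref{thm:appendix_3d_kernel}, $\bigl(\bJ^{(j,i)}\bigr)^{H}$ is the Jacobian of $\convthreedtransposeoperator\bigl(\bL^{(j,i)}\bigr)$, so the hypothesis translates exactly into the filter identity $\bL^{(i,j)} = -\convthreedtransposeoperator\bigl(\bL^{(j,i)}\bigr)$ for all $i,j$.

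Next I would write down the explicit $\bM$. For the off-diagonal blocks, set $\bM^{(i,j)} = \bL^{(i,j)}$ when $i<j$ and $\bM^{(i,j)} = \mathbf{0}$ when $i>j$. For the diagonal blocks I use a lexicographic ``first half'' of the three spatial indices $(a,b,c)$, where the half-widths are $p,q,r$ and the center voxel is $(p,q,r)$: put $\bM^{(i,i)}_{a,b,c} = \bL^{(i,i)}_{a,b,c}$ on the positions lexicographically before the center (i.e.\ $a\le p-1$; or $a=p,\ b\le q-1$; or $a=p,\ b=q,\ c\le r-1$), put $\bM^{(i,i)}_{p,q,r} = \tfrac12\,\bL^{(i,i)}_{p,q,r}$ at the center, and set $\bM^{(i,i)}_{a,b,c}=0$ elsewhere. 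This is the direct analogue of equations~\eqref{eq:M_i_j_neq}--\eqref{eq:M_i_i}, with the 3D flip $\convthreedtransposeoperator$ sending spatial index $(a,b,c)$ to $(2p-a,2q-b,2r-c)$.

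Then I would verify $\bL = \bM - \convthreedtransposeoperator(\bM)$ blockwise, using $[\bM - \convthreedtransposeoperator(\bM)]^{(i,j)} = \bM^{(i,j)} - \convthreedtransposeoperator\bigl(\bM^{(j,i)}\bigr)$. The cases $i<j$ and $i>j$ are immediate from the off-diagonal construction together with the translated hypothesis $\bL^{(i,j)} = -\convthreedtransposeoperator(\bL^{(j,i)})$. For $i=j$ the identity is checked element-by-element on $\bM^{(i,i)}_{a,b,c} - \overline{\bM^{(i,i)}_{2p-a,2q-b,2r-c}}$, splitting into the three subcases (lexicographically before the center, after the center, and the center itself) exactly as Subcases 3(a)--3(c) of Theorem~\ref{thm:main_theorem}; the after-center case invokes the diagonal skew-Hermitian filter relation $\bL^{(i,i)}_{a,b,c} = -\overline{\bL^{(i,i)}_{2p-a,2q-b,2r-c}}$ coming from Theorem~\ref{thm:appendix_3d_kernel}, and the center case uses that this same relation forces $\bL^{(i,i)}_{p,q,r}$ to be purely imaginary so that the $\tfrac12$ factor reproduces it after doubling. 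Finally, the converse direction is trivial: if $\bL = \bM - \convthreedtransposeoperator(\bM)$ then by Theorem~\ref{thm:appendix_3d_kernel} the Jacobian of $\convthreedtransposeoperator(\bM)$ is $\bJ_{\bM}^{H}$, so the Jacobian of $\bL$ is $\bJ_{\bM} - \bJ_{\bM}^{H}$, which is skew-Hermitian. The main obstacle is purely bookkeeping: defining the lexicographic half consistently across three nested index ranges and confirming that the flip $(a,b,c)\mapsto(2p-a,2q-b,2r-c)$ maps the before-center half bijectively onto the after-center half while fixing the unique center voxel; once that combinatorial setup is in place, the algebra is identical to the 2D case.
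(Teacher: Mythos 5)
Your proposal is correct and follows essentially the same route as the paper's proof: the same block decomposition $\bJ^{(i,j)}$, the same translation of skew-Hermitianity into $\bL^{(i,j)} = -\convthreedtransposeoperator(\bL^{(j,i)})$ via the 3D flip--transpose lemma, the same lexicographic-half construction of $\bM$ (including the factor $\tfrac12$ at the center voxel), and the same three-subcase verification on the diagonal blocks. The only difference is cosmetic: you spell out the converse direction explicitly as $\bJ_{\bM} - \bJ_{\bM}^{H}$, whereas the paper merely remarks that it follows trivially.
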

\begin{proof}
We first prove that if $\bJ$ is a skew-hermitian matrix, then:
$$\bL = \bM - \convthreedtransposeoperator(\bM) $$
Let $\bJ^{(i,j)}$ denote the block of size $n^{3} \times n^{3}$ as follows:
$$ \bJ^{(i,j)} = \bJ_{in^{3}:(i+1)n^{3}, jn^{3}:(j+1)n^{3}} $$
Since $\bJ$ is skew-hermitian, we have:
$$ \bJ^{(i,j)} = -\left(\bJ^{(j,i)}\right)^{H}, \quad \forall\ i,\ j \in [m-1] $$
It is readily observed that $\bJ^{(i,j)}$ corresponds to the jacobian of convolution with $1 \times 1$ filter $\bL_{i:i+1,j:j+1,:,:,:}$. For some given filter $\bA$, we use $\bA^{(i,j)}$ to denote the $1 \times 1$ filter $\bA_{i:i+1,j:j+1,:,:,:}$ for simplicity. Thus, the above equation can be rewritten as:
\begin{align}
\bL^{(i,j)} = -\convthreedtransposeoperator\left(\bL^{(j,i)}\right), \quad \forall\ i,\ j \in [m-1] \label{eq:L_i_j_h_equal_3d}
\end{align}
Now construct a filter $\bM$ such that for $i \neq j$:
\begin{align}
&\bM^{(i,j)} = \begin{cases}
\bL^{(i,j)}, \qquad i < j\\
\mathbf{0}, \qquad i > j \\
\end{cases} \label{eq:M_i_j_neq_3d}
\end{align}
For $i=j$, $\bM$ is given as follows:
\begin{align}
&\bM^{(i,i)}_{s,t,u} = \begin{cases}
\bL^{(i,i)}_{s,t,u}, \qquad s \leq p-1\\
\bL^{(i,i)}_{s,t,u}, \qquad s = p,\ t \leq q-1 \\
\bL^{(i,i)}_{s,t,u}, \qquad s = p,\ t = q,\ u \leq r-1 \\
0.5 \times \bL^{(i,i)}_{s,t,u}, \qquad s = p,\ t = q,\ u = r \\
0, \qquad otherwise
\end{cases} \label{eq:M_i_i_3d}
\end{align}
Next, our goal is to show that:
\begin{align*}
    \bL = \bM - \convthreedtransposeoperator(\bM)
\end{align*}
Now by the definition of $\convthreedtransposeoperator$, we have:
\begin{align}
& [\bM - \convthreedtransposeoperator(\bM)]^{(i,j)} \nonumber \\
& = \bM^{(i,j)} - \left[\convthreedtransposeoperator(\bM)\right]^{(i,j)} \nonumber \\
& = \bM^{(i,j)} - \convthreedtransposeoperator\left(\bM^{(j,i)}\right) \label{eq:M_diff_simple_3d}
\end{align}
\textbf{Case 1:} For $i < j$, using equations \eqref{eq:L_i_j_h_equal_3d} and \eqref{eq:M_i_j_neq_3d}:
$$\bM^{(i,j)} - \convthreedtransposeoperator\left(\bM^{(j,i)}\right) = \bM^{(i,j)} = \bL^{(i,j)}$$
\textbf{Case 2:} For $i > j$, using equations \eqref{eq:L_i_j_h_equal_3d} and \eqref{eq:M_i_j_neq_3d}:
\begin{align*}
&\bM^{(i,j)} - \convthreedtransposeoperator\left(\bM^{(j,i)}\right) \\
&= - \convthreedtransposeoperator\left(\bM^{(j,i)}\right) \\
&= - \convthreedtransposeoperator\left(\bL^{(j,i)}\right) = \bL^{(i,j)}
\end{align*}
\textbf{Case 3:} For $i = j$, we further simplify equation \eqref{eq:M_diff_simple_3d}:
\begin{align}
&\bM^{(i,i)}_{s,t,u} - \bigg[\convthreedtransposeoperator\left(\bM^{(i,i)}\right)\bigg]_{s,t,u} \nonumber \\
&= \bM^{(i,i)}_{s,t,u} - \overline{\bM^{(i,i)}_{2p-s,2q-t,2r-u}} \label{eq:M_diff_simple_element_3d}
\end{align}
\textbf{Subcase 3(a):} For ($s \leq p-1$) or ($s = p,\ t \leq q-1$) or ($s = p,\ t = q,\ u \leq r-1$), we have:
$$ \bM^{(i,i)}_{2p-s,2q-t,2r-u} = 0 $$
Thus for ($s \leq p-1$) or ($s = p,\ t \leq q-1$) or ($s = p,\ t = q,\ u \leq r-1$): equation \eqref{eq:M_diff_simple_element_3d} simplifies to $\bM^{(i,i)}_{s,t,u}$. The result follows trivially from the very definition of $\bL^{(i,i)}_{s,t,u}$, i.e equation \eqref{eq:M_i_i_3d}.\\ \\
\textbf{Subcase 3(b):} For ($s \geq p+1$) or ($s = p ,\ t \geq q+1$) or ($s = p ,\ t = q,\ u \geq r+1$), we have: 
$$\bM^{(i,i)}_{s,t,u} = 0$$
Thus, equation \eqref{eq:M_diff_simple_element} simplifies to:
\begin{align*}
& \bM^{(i,i)}_{s,t,u} - \overline{\bM^{(i,i)}_{2p-s,2q-t,2r-u}} = - \overline{\bM^{(i,i)}_{2p-s,2q-t,2r-u}} 
\end{align*}
Since ($s \geq p+1$) or ($s = p,\ t \geq q+1$) or ($s = p,\ t = q,\ u \geq r+1$), we have: ($2p-s \leq p-1$) or ($2p-s=p,\ 2q - t \leq q-1$) or ($2p-s=p,\ 2q - t = q,\ 2u-r \leq r-1$) respectively. Thus using equation \eqref{eq:M_i_i_3d}, we have:
\begin{align*}
& - \overline{\bM^{(i,i)}_{2p-s,2q-t,2r-u}} = - \overline{\bL^{(i,i)}_{2p-s,2q-t,2r-u}} 
\end{align*}
Since $\bL^{(i,i)}$ is a skew-hermitian filter, we have from Theorem \ref{thm:appendix_3d_kernel}:
\begin{align*}
& \bL^{(i,i)}_{s,t,u} = - \overline{\bL^{(i,i)}_{2p-s,2q-t,2r-u}}
\end{align*}
Thus in this subcase, equation \eqref{eq:M_diff_simple_element_3d} simplifies to $\bL^{(i,i)}_{s,t,u}$ again.\\
\textbf{Subcase 3(c):} For $s = p ,\ t = q,\ u = r$, since $\bL^{(i,i)}$ is a skew-hermitian filter, we have:
\begin{align*}
& \bL^{(i,i)}_{p, q, r} = - \overline{\bL^{(i,i)}_{p,q,r}} \\
& \bL^{(i,i)}_{p, q, r} + \overline{\bL^{(i,i)}_{p,q,r}} = 0
\end{align*}
Thus, $\bL^{(i,i)}_{p, q, r}$ is a purely imaginary number. In this subcase
\begin{align*}
& \bM^{(i,i)}_{s,t,u} - \overline{\bM^{(i,i)}_{2p-s,2q-t,2r-u}} \\
& = \bM^{(i,i)}_{p,q,r} - \overline{\bM^{(i,i)}_{p,q,r}} = 2\bM^{(i,i)}_{p,q,r}
\end{align*}
Using equation \eqref{eq:M_i_i_3d} , we have:
\begin{align*}
2\bM^{(i,i)}_{p,q,r} = \bL^{(i,i)}_{p,q,r}
\end{align*}
Thus, we get:
\begin{align*}
&\bM^{(i,i)}_{p,q,r} - \bigg[\convthreedtransposeoperator\left(\bM^{(i,i)}\right)\bigg]_{p,q,r} = \bL^{(i,i)}_{p,q,r}
\end{align*}

Thus we have established: $\bL = \bM - \convthreedtransposeoperator(\bM)$. Note that the opposite direction of the if and only if statement follows trivially from the above proof.
\end{proof}


\section{$\mathrm{MaxMin}$ Activation function}\label{sec:appendix_maxmin}
Given a feature map $\bX \in \mathbb{R}^{2m \times n \times n}$ (we assume the number of channels in $\bX$ is a multiple of $2$), to apply the $\mathrm{MaxMin}$ activation function, we first divide the input into two chunks of equal size: $\bA$ and $\bB$ such that:
\begin{align*}
    & \bA = \bX_{:m,:,:}\\
    & \bB = \bX_{m:,:,:}
\end{align*}
Then the $\mathrm{MaxMin}$ activation function is given as follows:
\begin{align*}
    & \mathrm{MaxMin}(\bX)_{:m,:,:} = \max(\bA, \bB)\\
    & \mathrm{MaxMin}(\bX)_{m:,:,:} = \min(\bA, \bB)
\end{align*}


\end{document}